\documentclass[11pt,letterpaper]{article}
\usepackage[T1]{fontenc}
\usepackage[utf8]{inputenc}
\usepackage{lmodern}
\usepackage[margin=1in]{geometry}
\usepackage{amsmath,amssymb,amsthm,mathtools}
\usepackage{booktabs,array,enumitem,microtype}
\usepackage{xcolor,graphicx,float,placeins}
\usepackage{algorithm}
\usepackage[noend]{algpseudocode}
\usepackage[round,authoryear]{natbib}
\usepackage{url}
\usepackage[hidelinks]{hyperref}
\hypersetup{pdftitle={Groupwise Distortion Guarantees for Preference-Based Alignment},
  pdfauthor={Jacob Brodkey, Roberto Tamez, Aaron Roth}}
\newcommand{\artifactlink}{\url{https://github.com/Rob10001/groupwise-distortion-experiments}}
\newtheorem{theorem}{Theorem}
\newtheorem{lemma}[theorem]{Lemma}
\newtheorem{proposition}[theorem]{Proposition}

\theoremstyle{definition}
\newtheorem{definition}[theorem]{Definition}
\newtheorem{assumption}[theorem]{Assumption}
\newtheorem{example}[theorem]{Example}
\theoremstyle{remark}

\newcommand{\E}{\mathbb E}
\newcommand{\Ek}{\mathbb E_k}
\newcommand{\Et}{\mathbb E_{t-1}}
\newcommand{\R}{\mathbb R}
\newcommand{\X}{\mathcal X}
\newcommand{\F}{\mathcal F}
\newcommand{\B}{\mathcal B}
\newcommand{\simplex}{\Delta(\mathcal A)}
\newcommand{\piref}{\pi_{\mathrm{ref}}}
\newcommand{\pihat}{\widehat\pi}
\newcommand{\piGLHF}{\pi_{\mathrm{GLHF}}}
\newcommand{\ghat}{\widehat g}
\newcommand{\KL}{\mathrm{KL}}
\newcommand{\Expl}{\operatorname{Expl}}
\newcommand{\Dist}{\operatorname{Dist}}
\newcommand{\Var}{\operatorname{Var}}

\DeclareMathOperator*{\argmin}{arg\,min}
\DeclareMathOperator*{\argmax}{arg\,max}
\setlist[itemize]{leftmargin=1.4em,itemsep=0.2em,topsep=0.4em}
\setlist[enumerate]{leftmargin=1.7em,itemsep=0.2em,topsep=0.4em}

\title{Groupwise Distortion Guarantees for Preference-Based Alignment}
\author{Jacob Brodkey\thanks{Jacob Brodkey and Roberto Tamez contributed equally and are co-first authors.}
\quad Roberto Tamez\footnotemark[1]\quad Aaron Roth\\[0.6ex]
University of Pennsylvania}
\date{}

\begin{document}
\maketitle
\begin{abstract}
Preference-based alignment methods such as reinforcement learning from human feedback (RLHF) and Nash learning from human feedback (NLHF) aggregate pairwise preferences to learn an LLM policy, but a natural goal is maximizing \emph{social welfare} (average cardinal utility), which comparisons alone need not identify. Gölz, Haghtalab, and Yang (GHY) measure the gap by \emph{distortion}: the worst-case ratio between the welfare of the best fixed lottery (distribution over responses) and of the learned lottery. They show NLHF is optimal when every user receives the same lottery. Account-based LLMs, however, have information about their users and can serve different lotteries to different people. We give an efficient algorithm, GLHF, that learns a single group-conditioned policy from one comparison per user. Under individual Bradley--Terry comparisons, GLHF asymptotically matches GHY's optimal population distortion bound simultaneously on every group in a prespecified, possibly overlapping collection, with sample complexity growing logarithmically in the number of groups and inversely with the smallest group mass. A sharper guarantee for groups with similar preferences approaches distortion of one when members share a feasible favorite response. In experiments using human coffee ratings and synthetic LLM-generated ratings, GLHF lowers distortion in every evaluated group and substantially reduces worst-group distortion relative to NLHF and other group-agnostic baselines.
\end{abstract}

\section{Introduction}
\label{sec:introduction}
Preference-based alignment methods, such as reinforcement learning from human feedback (RLHF) \citep{ouyang2022} and Nash learning from human feedback (NLHF) \citep{munos2024}, train LLM policies using pairwise comparisons of candidate outputs. A natural goal is to maximize \emph{social welfare}: the average cardinal utility users obtain from the responses they receive~\citep{golz2025}. However, when users have heterogeneous preferences~\citep{santurkar2023,kirk2024prism}, aggregate choice probabilities need not identify average response utilities.

G\"olz, Haghtalab, and Yang (GHY)~\citep{golz2025} study this gap through \emph{distortion}: the worst-case ratio of the welfare of the best fixed \emph{lottery} (a distribution over responses), chosen with full knowledge of utilities, to the welfare of a lottery learned only from sampled comparisons. Assuming individual Bradley--Terry preferences (each user's comparison probabilities depend logistically on their utility differences) and one comparison per sampled user, they prove a worst-case lower bound on distortion and show that NLHF matches it asymptotically, even when the lottery must remain close to a reference distribution. The NLHF lottery is optimized over the population, so its welfare guarantee does not extend to  subgroups of users.

Account-based LLMs, however, observe user context such as custom instructions, settings, and persistent memory~\citep{openai2023custom,openai2024memory}, so a \emph{policy} can serve different lotteries to different users. We ask whether GHY's optimal distortion guarantee can be obtained simultaneously on prespecified, possibly overlapping groups of users, at sample and computational cost growing only modestly with the number of groups. For disjoint groups this is easy: separately solve an NLHF problem on each group. With overlapping groups, a user belongs to several groups but can receive only one lottery, and so this approach fails. Learning a separate lottery for each group intersection pattern recovers disjointness, but can create up to $2^K$ learning problems and fragment the data across those intersections.

\subsection{Our model and contribution}
A fixed prompt has $m$ candidate completions. Each user's observable context reveals membership in $K$ possibly overlapping groups, and we learn a group-conditioned policy from $T$ independently sampled users, each giving one pairwise comparison among completions. The queried completion pair is either uniformly random (\emph{passive} sampling) or consists of one completion drawn from the learner's current policy and one drawn uniformly from the rest (\emph{active} sampling). Following GHY, every served lottery and every comparator must lie in a Kullback--Leibler (KL) ball of radius $\tau$ around a reference lottery $\piref$; a uniform $\piref$ with $\tau=\log m$ recovers the unconstrained ``social-choice'' setting. Each group's benchmark is its own best feasible fixed lottery.

\paragraph{Simultaneous groupwise comparison guarantees.}
We introduce Groupwise Learning from Human Feedback (\textsc{GLHF}), which learns a single feasible group-conditioned policy $\piGLHF$ such that, simultaneously for every group, no feasible lottery is substantially preferred to it on average over the group. This is an approximate groupwise analogue of the maximal-lottery property~\citep{fishburn1984} underlying NLHF. Let the \emph{exploitability} of $\pi$ on group $k$, $\Expl_k(\pi)$, be the average preference margin (probability of winning minus probability of losing) of the best challenger lottery over $\pi$ on group $k$. Under active sampling, with probability at least $1-\delta$, simultaneously for every group $k$,
\[
\Expl_k(\piGLHF)\le\varepsilon_{T,k},
\qquad
\varepsilon_{T,k}
=
O\!\left(
\sqrt{
\frac{(m-1)\bigl(1+\tau+\log(K/\delta)+\log\log(eT)\bigr)}
     {Tp_k}
}
\right)
\quad\text{as }Tp_k\to\infty,
\]
where $p_k$ is the population mass of group $k$. Passive sampling replaces $m-1$ by $\binom m2$. GLHF adapts its learning rates to the observed comparisons; it does not need to know the group masses.

\paragraph{Groupwise welfare.}
Under individual Bradley--Terry preferences with inverse temperature $\beta>0$, these certificates give a simultaneous welfare guarantee for all groups. As the comparison error vanishes, the distortion bound for each group with positive benchmark welfare approaches
\[
\frac{\beta}{2}\,\frac{e^\beta+1}{e^\beta-1}.
\]
This asymptotic distortion factor approaches one as $\beta\downarrow0$ and grows like $\beta/2$ as $\beta\to\infty$. It is the optimal population-level distortion factor of \citet{golz2025}. Larger $\beta$ makes comparisons more reliably follow utility order; in the deterministic limit, they no longer reveal the intensity of a user's preference. Theorem~\ref{thm:barrier} shows that the same factor is worst-case optimal for contextual policies when observable context is uninformative.

\paragraph{Beyond worst-case distortion.} Our welfare bound above holds in the worst case over \emph{all} possible group structures.
But groups that are informative about their members' preferences can yield substantially stronger guarantees. Suppose a group's best feasible lottery gives its members nearly the utility they would obtain from their individual favorite completions. Theorem~\ref{prop:homogeneous} shows that GLHF then nearly attains that group's benchmark welfare, with a loss proportional to this shortfall plus statistical error. Different groups can agree on different completions, and they need not partition the population. In particular, when almost every member of a group shares a favorite completion whose point mass is feasible, the group's distortion approaches one. More generally, approximate agreement suffices when the KL constraint does not itself impose a large welfare loss.

\paragraph{Experiments.}
Using human ratings of four coffees, with training limited to one simulated comparison per respondent and held-out ratings used to measure welfare directly, multigroup GLHF achieves mean worst-group distortion $1.08$ versus $1.28$ for NLHF. A parallel exploratory study uses synthetic, LLM-generated ratings of six answers to one question. Section~\ref{sec:experiments} reports both studies and compares active and passive sampling as the number of comparisons grows. Our code, input datasets, and saved results are publicly available.\footnote{\artifactlink{}}

\paragraph{Connections to prior work.}
Multigroup learning asks for a predictor that competes with each overlapping group's own best predictor~\citep{blum2020,rothblum2021,tosh2022,globusharris2022,acharya2024}. GLHF pursues an analogous guarantee for preference-based alignment: within every group, no feasible fixed lottery should substantially outperform the learned policy in pairwise comparisons. Personalized preference methods also adapt predictions or policies to users and groups~\citep{zhao2024gpo,ramesh2024}, including by minimizing the largest group-specific preference-training loss. We obtain a stronger groupwise-optimality guarantee rather than only a minimax guarantee over groups, and do so with respect to cardinal welfare rather than only training loss.  Section~\ref{sec:related} explains these differences in objectives and benchmarks and discusses other related work.

\section{Preliminaries}
\label{sec:model}

Fix a prompt and $m\ge2$ candidate responses, called \emph{alternatives}, indexed by $\mathcal A:=\{1,\ldots,m\}$. A user is drawn from a fixed population distribution over an observable context $X$ in a measurable space $\X$ and an unobserved utility vector $U\in[0,1]^m$. Context and utility may be arbitrarily correlated. All population expectations below refer to this joint distribution.

\paragraph{Lotteries, policies, and groups.}
For clarity, we distinguish a \emph{policy} from a \emph{lottery}. Both specify distributions over $\mathcal A$, but a policy may depend on context, whereas a lottery is fixed. Formally, a lottery is a vector $q\in\simplex:=\{q\in\R^m:q\ge0,\ \sum_aq(a)=1\}$, while a policy is a map $\pi:\X\to\simplex$. Write $\pi(a\mid x)$ for the probability the policy serves $a$ given a context $x$.

Before training, the learner is given $K\ge1$ possibly overlapping groups. Group $k$ has an indicator $G_k:\X\to\{0,1\}$ and population mass $p_k:=\Pr(G_k(X)=1)>0$; the mass need not be known to the learner. The groups cover the context space: $\sum_{k=1}^K G_k(x)\ge1$ for every $x$. This can always be ensured by including the universal group $G\equiv1$, which also gives a population-wide guarantee. The learned policy is \emph{group-conditioned}: it depends on $x$ only through $(G_1(x),\ldots,G_K(x))$. Group membership is available during both training and deployment.

\paragraph{A common feasible set.}
Before training, fix a full-support reference lottery $\piref$, meaning $\piref(a)>0$ for every $a$, and a radius $\tau\ge0$. For a full-support lottery $v$, the Kullback--Leibler (KL) divergence is $\KL(q\|v):=\sum_aq(a)\log(q(a)/v(a))$, with zero summands when $q(a)=0$. All logarithms are natural. The common feasible set is
\begin{equation}
\B_\tau(\piref)
:=\{q\in\simplex:\KL(q\|\piref)\le\tau\}.
\label{eq:ball}
\end{equation}
We abbreviate the set by $\B_\tau$. A policy is feasible if $\pi(x)\in\B_\tau$ for every $x$. Following GHY's terminology, the \emph{alignment setting} imposes this KL constraint, while the \emph{social choice setting} allows every lottery. The latter is recovered once $\tau\ge\log(1/\min_a\piref(a))$, so we cap $\tau$ at this value without changing the feasible set. At $\tau=0$, only $\piref$ is feasible and no learning is needed.

\paragraph{Utilities and welfare.}
For integrable $Z$, write $\Ek[Z]:=\E[Z\mid G_k(X)=1]=\E[G_k(X)Z]/p_k$. Group welfare and its best fixed-lottery benchmark are
\begin{equation}
W_k(\pi):=\Ek[\pi(X)^\top U],
\qquad
W_k^\star:=\max_{q\in\B_\tau}W_k(q).
\label{eq:welfare}
\end{equation}

Following GHY's welfare-ratio notion~\citep{golz2025}, for $W_k(\pi)>0$ define the \emph{constrained group distortion}
\begin{equation}
\Dist_k(\pi):=\frac{W_k^\star}{W_k(\pi)}.
\label{eq:group-distortion}
\end{equation}
\begin{example}[A contextual policy can outperform every fixed lottery]
Suppose there are two equally common, disjoint user types and two responses. Their utility vectors are $(1,0)$ and $(0,1)$, respectively. In the social choice setting, every fixed lottery has population welfare $1/2$. If group memberships reveal the type, a policy that serves each type its preferred response has welfare $1$. Its distortion on the universal group is therefore less than one ($1/2$) meaning it outperforms the best fixed lottery.
\end{example}

\paragraph{Pairwise feedback and margins.}
A comparison presents a user with two distinct alternatives $\{a,b\}$ and
records which one they choose. Write $a\succ b$ for the event that the user
chooses $a$ when this pair is queried; exactly one of $a\succ b$ and
$b\succ a$ occurs. Conditional on the user's $(X,U)$ and the queried pair,
the choice follows a fixed, possibly stochastic comparison law. The comparison
guarantee allows this law to depend arbitrarily on $(X,U)$; the welfare
guarantee below will impose a Bradley--Terry law. In probabilities involving
$a\succ b$, the query $\{a,b\}$ is implicit. Define the signed preference
margin at context $x$ by
\begin{equation}
M_{ab}(x):=\Pr(a\succ b\mid X=x) - \Pr(b \succ a\mid X=x)
=2\Pr(a\succ b\mid X=x)-1,
\label{eq:margin}
\end{equation}
and set $M_{aa}(x)=0$. The matrix $M(x)$ is antisymmetric with entries in $[-1,1]$. For independent draws from lotteries $q,p$, independent of the user's latent utility given $x$, the \emph{advantage} $q^\top M(x)p$ is the expected win-minus-loss margin of $q$ against $p$. The algorithm observes comparisons, not $M(x)$.

\begin{definition}[Signed groupwise exploitability]
\label{def:exploit}
For a policy $\pi$, its exploitability on group $k$ is
\begin{equation}
\Expl_k(\pi):=\sup_{q\in\B_\tau}\Ek[q^\top M(X)\pi(X)].
\label{eq:exploit}
\end{equation}
This is the largest group-average advantage of a fixed feasible lottery over $\pi$. We say $\pi$ has an $\varepsilon_k$ comparison certificate for group $k$ if $\Expl_k(\pi)\le\varepsilon_k$. A group-conditioned policy can outperform every fixed feasible lottery on average, so its exploitability can be negative.
\end{definition}

\paragraph{Training data and sampling protocols.}
Training uses $T\ge2$ independent users $(X_t,U_t)$ from the deployment population. Before round $t$, the learner fixes a current policy map $x\mapsto\pihat_t(x)$ using previous observations. It then observes $X_t$, queries one pair of distinct alternatives, and observes the winner, never $U_t$. Pair selection is independent of $U_t$ conditional on the history and $X_t$; the winner is then drawn from the comparison law above, independently of the history given $(X_t,U_t)$ and the pair. We study two ways to choose the pair:
\begin{itemize}
\item \emph{Passive uniform sampling} draws an unordered pair uniformly, independently of the current policy.
\item \emph{Policy-adaptive active sampling} draws $A\sim\pihat_t(X_t)$, then draws $B$ uniformly from $\mathcal A\setminus\{A\}$, and queries the unordered pair $\{A,B\}$.
\end{itemize}
Let $\rho_t(a,b)$ denote the probability of querying $\{a,b\}$ conditional on the history and $X_t$. Thus, for $a\ne b$,
\begin{equation}
\rho_t^{\mathrm{pas}}(a,b)=\binom m2^{-1},
\qquad
\rho_t^{\mathrm{act}}(a,b)=\frac{\pihat_t(a\mid X_t)+\pihat_t(b\mid X_t)}{m-1}.
\label{eq:pair-probabilities}
\end{equation}
The active protocol concentrates queries on pairs with a response the current policy is likely to serve.

\paragraph{A utility model for welfare.}
The margins above are defined for any pairwise response law. To relate them to welfare, we assume the following link between comparison probabilities and utilities.
\begin{assumption}[Individual Bradley--Terry comparisons]
\label{ass:bt}
There is a common inverse temperature $\beta>0$ such that, conditional on $X$, $U$, and a queried pair $\{a,b\}$, the probability of $a\succ b$ is $\sigma(\beta(U(a)-U(b)))$, where $\sigma(z)=(1+e^{-z})^{-1}$~\citep{bradleyterry1952}.
\end{assumption}
Consequently, $M_{ab}(x)=\E[2\sigma(\beta(U(a)-U(b)))-1\mid X=x]$. The latent utilities may vary arbitrarily across users.

\section{Simultaneous welfare guarantees}
\label{sec:results}

GLHF learns one feasible contextual policy such that, within every named
group, no fixed feasible lottery has much average pairwise advantage over it.
In each training round, GLHF combines the lotteries maintained for the sampled user's
groups and updates them from that user's single comparison; Section~\ref{sec:algorithm}
gives the procedure and its pseudocode.
We first bound this advantage from sampled comparisons, then relate it to
each group's welfare relative to its own best fixed feasible lottery.

\begin{theorem}[Simultaneous comparison guarantees]
\label{thm:certificate}
Under either sampling protocol in Section~\ref{sec:model}, define the protocol-dependent factor
\begin{equation}
R:=\begin{cases}
m-1,&\text{policy-adaptive active sampling},\\
\binom m2,&\text{passive uniform sampling}
\end{cases}.
\label{eq:protocol-scale}
\end{equation}
Define the learning-rate parameter
\begin{equation}
\Lambda_T:=\max\left\{1,\log\!\left[K\left(1+\tfrac12\log(T+1)\right)\right]\right\}.
\label{eq:adaptive-log-factor}
\end{equation}
For $\delta\in(0,1)$, let
\begin{equation}
H:=\tau+\Lambda_T+\log\frac{2K}{\delta},
\qquad
\varepsilon_{T,k}:=8\sqrt{\frac{RH}{Tp_k}}+8\frac{RH}{Tp_k}
\quad(\tau>0).
\label{eq:main-error}
\end{equation}
For $\tau>0$, Algorithm~\ref{alg:main} returns a group-conditioned policy $\piGLHF$
satisfying $\KL(\piGLHF(x)\|\piref)\le\tau$ for every context $x$.
With probability at least $1-\delta$, it satisfies simultaneously for every $k\in[K]$,
\begin{equation}
\Expl_k(\piGLHF)\le\varepsilon_{T,k}.
\label{eq:main-certificate}
\end{equation}
\end{theorem}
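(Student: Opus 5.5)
We would prove Theorem~\ref{thm:certificate} by reducing the groupwise exploitability guarantee to simultaneous no-regret learning. There is one online learner per group over the feasible set $\B_\tau$. Each learner is fed importance-weighted estimates of the losses $-e_a^\top M(X_t)\pihat_t(X_t)$, and only on rounds where $G_k(X_t)=1$. The policy served at context $x$ combines the iterates of the groups containing $x$. The combination is chosen so that the sum of the active groups' instantaneous regrets is nonpositive, as in sleeping-experts or multigroup reductions (e.g.\ a fixed point or convex combination making $\sum_k G_k(x)\,\ell_k^\top \pi = \ldots$ balanced). We then take the output $\piGLHF$ to be the time-average of the played policies, and the antisymmetry $\pi^\top M\pi=0$ does the rest. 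Feasibility, $\KL(\piGLHF(x)\|\piref)\le\tau$, is immediate. Every group iterate lies in the convex set $\B_\tau$ (each learner is a KL-regularized/entropic mirror descent projected onto the ball), and convex combinations and time averages stay in $\B_\tau$ by convexity of $\KL(\cdot\|\piref)$.

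Concretely, for fixed $k$ and any $q\in\B_\tau$, the time-averaging gives
\[
\Ek[q^\top M(X)\piGLHF(X)]=\frac1{Tp_k}\sum_t \E\bigl[G_k(X_t)\,q^\top M(X_t)\pihat_t(X_t)\bigr].
\]
Here the averaging is taken over fresh users, and the output policy averages the $\pihat_t$. We split the right side into three pieces. The first is the realized group-$k$ regret of learner $k$ against $q$, measured on the estimated losses. The second is a martingale difference between true and estimated margins. The third is the term $\sum_t G_k(X_t)\,\pihat_t^\top M\pihat_t$, which is zero by antisymmetry. The unbiased estimator for the row $M(X_t)\pihat_t(X_t)$ is $\hat g_t(a)=\frac{\mathbf 1\{\text{pair }\{a,b\}\text{ queried}\}\,\pihat_t(b)\,(\pm1)}{\rho_t(a,b)}$ summed over $b$. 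Its second moment against the played policy is bounded by $R$. This holds for active sampling because $\rho^{\mathrm{act}}\ge(\pihat(a)+\pihat(b))/(m-1)$ cancels the $\pihat(b)$ weight, and for passive sampling because $\rho^{\mathrm{pas}}=\binom m2^{-1}$. This local-norm bound is where $R$ enters.

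Next, mirror descent with entropy regularizer relative to $\piref$ restricted to $\B_\tau$ has comparator "diameter" $\KL(q\|\piref)\le\tau$. The standard local-norm regret bound is $\tau/\eta+\eta\sum_t G_k(X_t)\,\|\hat g_t\|^2_{\pihat}$, with expectation $\lesssim \tau/\eta+\eta R\,N_k$, where $N_k=\sum_tG_k(X_t)$. Since $p_k$ is unknown, we would run a doubling grid of $O(\log T)$ learning rates per group and aggregate them with a meta-learner, or equivalently use an adaptive rate from observed $\sum\|\hat g\|^2$. A union bound over the $K(1+\tfrac12\log(T+1))$ grid points explains $\Lambda_T$. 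For the martingale piece we apply Freedman's inequality with variance proxy $R N_k$ and range $O(R)$ (importance weights bounded by $R$). After union bounding over $K$ groups, this yields $\sqrt{RN_k\log(2K/\delta)}+R\log(2K/\delta)$. The supremum over $q$ is handled by linearity, since the bound is uniform in $q$ for the regret term, and the martingale term only involves the $m$ vertices $e_a$ (or we bound it directly on the regret comparator). Finally, Chernoff/Bernstein gives $N_k\approx Tp_k$, and dividing by $Tp_k$ produces $8\sqrt{RH/(Tp_k)}+8RH/(Tp_k)$.

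The main obstacle is the combination step across overlapping groups. We need a single distribution $\pihat_t(x)$ such that, for every active group $k$, the realized regret of learner $k$ is measured against the served $\pihat_t(x)$ rather than against learner $k$'s own iterate. Otherwise the antisymmetry cancellation fails. We would try the standard multigroup trick. Each group learner outputs $w_{k,t}$, and the served lottery is chosen so that $\sum_{k\ni x} c_k\,(w_{k,t}-\pihat_t)^\top \hat g\le0$ for all loss vectors, e.g.\ $\pihat_t=\sum_k c_k w_{k,t}/\sum c_k$, with $c_k$ derived from exponential weights over group-experts (sleeping experts). The hard part is making this compatible with the KL-ball projection and the local-norm variance bound, while keeping constants at $8$.
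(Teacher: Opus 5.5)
Your architecture (per-group learners on $\B_\tau$, a sleeping-experts mixture, the importance-weighted score, time-averaging) matches the paper's. However, the step you flag as the main obstacle is the core of the proof, and you leave it undone. Your three-way split of $\sum_{t:G_k(X_t)=1}q^\top\ghat_t$ omits $\sum_{t:G_k(X_t)=1}(\pi_{t,k}-\pihat_t(X_t))^\top\ghat_t$, which is the advantage of group $k$'s own specialist over the served lottery. Taking $\pihat_t$ to be a $c$-weighted mixture makes the weighted sum of these terms vanish trivially. But you give no weight update, and no argument that turns this cancellation into a bound for each group.

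The paper fills this step as follows.
\begin{itemize}
\item Your score already satisfies $\pihat_t(X_t)^\top\ghat_t=0$ pathwise, so specialist $k$'s outer reward is $r_{t,k}=\pi_{t,k}^\top\ghat_t$.
\item The served policy is $\pihat_t(x)\propto\sum_{k:G_k(x)=1}\alpha_{t,k}w_{t,k}\pi_{t,k}$, with the group's current rate inside the mixing weight. This gives $\sum_{j:G_j(X_t)=1}\alpha_{t,j}w_{t,j}r_{t,j}=0$, so the step $w_{t,k}\mapsto w_{t,k}(1+\alpha_{t,k}r_{t,k})$ conserves total weight exactly.
\item When a rate decreases, the geometric pull toward $1/K$ keeps the total weight at most $1+\tfrac12\log(T+1)$. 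Hence $\log(Kw_{T+1,k})\le\Lambda_T$.
\item Telescoping $\log(Kw_{t,k})/\alpha_{t,k}$ with $\log(1+z)\ge z-z^2$ then gives $\sum_{t:G_k(X_t)=1}r_{t,k}\le\Lambda_T/\alpha_{T+1,k}+\sum_{t:G_k(X_t)=1}\alpha_{t,k}r_{t,k}^2$.
\end{itemize}
So $\Lambda_T$ is a deterministic bound on the log of the total outer weight, not a union bound over a learning-rate grid. The KL projection plays no role in this step.

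Several of your other steps would not deliver the stated $H$.
\begin{itemize}
\item Bounding $\sup_{q\in\B_\tau}q^\top S_k$ through the $m$ vertices requires a union bound that costs $\log m$, which can far exceed $\tau$; your displayed martingale bound omits even this. The paper instead applies Markov's inequality once to the mixture $\sum_a\piref(a)e^{\zeta S_{k,a}}$, with each coordinate controlled by a Bernstein-type supermartingale. It then uses nonnegativity of $\KL$ from $q$ to the tilted lottery proportional to $\piref(a)e^{\zeta S_{k,a}}$. This charges each comparator only $\KL(q\|\piref)\le\tau$.
\item A regret bound in expectation, $\tau/\eta+\eta RN_k$, does not give a high-probability certificate. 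You need a pathwise bound in terms of the realized $V_{T,k}=\sum_{t:G_k(X_t)=1}\sum_a\pi_{t,k}(a)\ghat_{t,a}^2$, plus the separate concentration bound $V_{T,k}\le2Tp_k/R+\log(2K/\delta)$. That bound needs the second moment coordinatewise, $\E[\ghat_{t,c}^2\mid\F_{t-1},X_t]\le1/R$ for every $c$, because the local norm is weighted by each specialist's $\pi_{t,k}$, not by the played policy.
\item With adaptive inner rates, $\KL(q\|\pi_{t,k})$ is not controlled by $\tau$, so $\KL(q\|\pi_{t,k})/\eta_{t,k}$ does not telescope. The paper interpolates geometrically toward $\piref$ with power $\eta_{t+1,k}/\eta_{t,k}$, which makes $(\KL(q\|\pi_{t,k})-\KL(q\|\piref))/\eta_{t,k}$ telescope. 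This, not the combination step, is where compatibility with the projection matters.
\item Use the deterministic predictable variance $p_k/R$ per round, which already averages over $G_k(X_t)$, rather than the random proxy $RN_k$. This avoids peeling and a separate bound on $N_k$.
\item Write the population identity with $\Et$, since $\piGLHF$ is random.
\end{itemize}
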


For a fixed target error, the protocol-dependent factor in the comparison
requirement is $m-1$ under active sampling and $\binom m2$ under passive sampling.
The quantity $Tp_k$ is the expected number of sampled users in group $k$.
The algorithm adapts its learning rates to observed scores and does not need
the masses $p_k$ or confidence level $\delta$. The number of groups enters
only logarithmically, and adaptation contributes an additional
$O(\log\log(T+1))$ term to $H$.
The comparison guarantee itself makes no utility-model assumption.

\paragraph{Why the certificate bounds welfare.}
Set $L:=\sigma'(0)=1/4$ and
$\ell_\beta:=(\sigma(\beta)-1/2)/\beta$. Write
$f(d):=2\sigma(\beta d)-1$. This function is odd and concave on $[0,1]$,
so its chord between $0$ and $1$ and its tangent at $0$ give
\begin{equation}
2\beta\ell_\beta d\le f(d)\le2\beta Ld,
\qquad d\in[0,1].
\label{eq:link-sandwich}
\end{equation}
These bounds yield the following linearization inequality
\citep[Lemma~1 and Appendix~C]{golz2025}:
\begin{equation}
2\sigma(\beta(u-v))-1
\le 2\beta\big(Lu-\ell_\beta v\big),
\qquad u,v\in[0,1].
\label{eq:asymmetric-link}
\end{equation}
For $u\ge v$, use the tangent bound and $L\ge\ell_\beta$; for $u<v$,
use oddness and the chord bound.
The left side is a comparison margin; the right side is linear in the two
utilities. Averaging within a group therefore relates its comparison margin
to two group welfares. For each group, we apply this inequality with the group's best fixed
feasible lottery chosen as the comparator.

\begin{proposition}[Simultaneous welfare guarantees]
\label{prop:welfare-main}
\label{prop:bridge}
Under Assumption~\ref{ass:bt}, on the $1-\delta$ success event of Theorem~\ref{thm:certificate},
the GLHF policy satisfies simultaneously for every $k\in[K]$,
\begin{equation}
W_k(\piGLHF)\ge\frac{\ell_\beta}{L}W_k^\star-\frac{\varepsilon_{T,k}}{2\beta L}.
\label{eq:main-welfare}
\end{equation}
\end{proposition}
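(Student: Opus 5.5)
We plan to fix a group $k$, work on the success event of Theorem~\ref{thm:certificate}, and choose as comparator a maximizer $q_k^\star\in\B_\tau$ of $W_k$, which exists because $\B_\tau$ is compact and $W_k$ is linear in $q$. The certificate \eqref{eq:main-certificate} then gives $\Ek[q_k^{\star\top}M(X)\piGLHF(X)]\le\Expl_k(\piGLHF)\le\varepsilon_{T,k}$. Everything reduces to lower bounding this advantage by a linear combination of $W_k^\star$ and $W_k(\piGLHF)$.

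Under Assumption~\ref{ass:bt}, for $a\ne b$ we have $M_{ab}(x)=\E[2\sigma(\beta(U(a)-U(b)))-1\mid X=x]$, and this formula also holds for $a=b$ because both sides vanish. We will rewrite the advantage as
\[
\Ek\Big[\textstyle\sum_{a,b}q_k^\star(a)\piGLHF(b\mid X)\,\E[f(U(a)-U(b))\mid X]\Big].
\]
Here $a$ is drawn from $q_k^\star$ and $b$ from $\piGLHF(X)$, and $f(d)=2\sigma(\beta d)-1$. Because the lotteries depend only on $X$, the tower property turns this into $\Ek[\sum_{a,b}q_k^\star(a)\piGLHF(b\mid X)f(U(a)-U(b))]$, an average over the joint law of $(X,U)$ within group $k$.

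The key step is a lower bound on $f(u-v)$, obtained by applying \eqref{eq:asymmetric-link} with the roles swapped. Since $f$ is odd,
\[
f(u-v)=-f(v-u)\ge-2\beta(Lv-\ell_\beta u)=2\beta(\ell_\beta u-Lv),\qquad u,v\in[0,1].
\]
We substitute $u=U(a)$ and $v=U(b)$, then sum against the product weights. Since $q_k^\star$ and $\piGLHF(X)$ each sum to one, the sum collapses to $2\beta(\ell_\beta q_k^{\star\top}U-L\,\piGLHF(X)^\top U)$. Taking $\Ek$ gives
\[
\varepsilon_{T,k}\ge 2\beta\big(\ell_\beta W_k^\star-L\,W_k(\piGLHF)\big).
\]
Rearranging and dividing by $2\beta L>0$ yields \eqref{eq:main-welfare}. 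The event of Theorem~\ref{thm:certificate} holds simultaneously for all $k$, so the conclusion is simultaneous with no extra union bound.

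The main obstacle is only a matter of care, not a hard estimate. We need the correct orientation of the asymmetric inequality: it is applied to $v-u$ and then negated, rather than applied directly. We also need to justify exchanging the conditional expectation defining $M(X)$ with the $X$-measurable policy weights. Both steps are routine, since everything is bounded and the policy is group-conditioned, hence a function of $X$.
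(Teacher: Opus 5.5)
Your proof is correct and is essentially the paper's argument. The paper fixes the same maximizer $q_k^\star$ and uses antisymmetry of $M$ to write the certificate as $-\varepsilon_{T,k}\le\Ek[\piGLHF(X)^\top M(X)q_k^\star]$, then applies \eqref{eq:asymmetric-link} directly; you instead keep the certificate's orientation and flip via oddness of $f$ inside the expectation, which is the same step, and both arrive at $\varepsilon_{T,k}\ge2\beta(\ell_\beta W_k^\star-L\,W_k(\piGLHF))$, which rearranges to \eqref{eq:main-welfare}.
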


\begin{proof}
Fix a group $k$ and a welfare-maximizing feasible lottery
$q_k^\star\in\argmax_{q\in\B_\tau}W_k(q)$. Draw a fresh user $(X,U)$ from
group $k$, then draw $A\sim\piGLHF(X)$ and $B\sim q_k^\star$ independently
of each other and of $U$ given $X$. All expectations below include these
response draws. Theorem~\ref{thm:certificate} and antisymmetry give
$-\varepsilon_{T,k}\le\Ek[\piGLHF(X)^\top M(X)q_k^\star]$. Thus
\begin{equation}
\begin{aligned}
-\varepsilon_{T,k}
&\le \Ek[\piGLHF(X)^\top M(X)q_k^\star]=\Ek[M_{AB}(X)]\\
&=\Ek\!\left[2\sigma\!\left(\beta(U(A)-U(B))\right)-1\right]\\
&\le 2\beta\,\Ek\!\left[LU(A)-\ell_\beta U(B)\right]\\
&=2\beta\big(LW_k(\piGLHF)-\ell_\beta W_k^\star\big).
\end{aligned}
\label{eq:certificate-welfare-sandwich}
\end{equation}
The second line uses the definition of $M$, the third applies
\eqref{eq:asymmetric-link}, and the last uses linearity of expectation:
$\Ek[U(A)]=W_k(\piGLHF)$ and $\Ek[U(B)]=W_k^\star$.
Rearranging gives \eqref{eq:main-welfare}.
\end{proof}

For every group with $W_k^\star>0$, define the normalized comparison error
$\Delta_k:=\varepsilon_{T,k}/(2\beta\ell_\beta W_k^\star)$.
Proposition~\ref{prop:welfare-main} gives the finite-sample distortion bound
\[
\Dist_k(\piGLHF)\le\frac{L/\ell_\beta}{1-\Delta_k},
\qquad \Delta_k<1.
\]
As the certificate error vanishes, this bound approaches
$L/\ell_\beta=\beta(1+e^{-\beta})/[2(1-e^{-\beta})]$.
The bound applies in both the social
choice and alignment settings, with each group's benchmark taken over
the same feasible set as the learned policy.

\subsection{Why the worst-case factor cannot be improved}
\label{sec:barrier}

The lower-bound construction of
\citet[Theorem~3 and Appendix~E.3]{golz2025} extends directly to the
groupwise setting. When utilities are independent of context, averaging a
contextual policy over any group's contexts gives a fixed lottery with the
same group welfare. The following consequence shows that the welfare factor
above is optimal over the full instance class.

\begin{theorem}[Worst-case optimality of the welfare factor]
\label{thm:barrier}
Fix $\beta>0$. For every $c>\ell_\beta/L$, there are a response-set size
$m$, a uniform reference, and a nonbinding radius $\tau=\log m$ such that no
algorithm can guarantee
\begin{equation}
\E[W_k(\widehat\pi)]\ge cW_k^\star
\label{eq:impossible-fraction}
\end{equation}
on every instance for any specified positive-mass group $k$. This holds at
every sample size, with adaptive pair selection, randomized contextual output,
uninformative contexts, and one comparison from each independently sampled
user. The expectation is over training and algorithmic randomness.
\end{theorem}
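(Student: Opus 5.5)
The plan is a reduction to the population-level lower bound of \citet[Theorem~3 and Appendix~E.3]{golz2025}: any groupwise algorithm that beats the factor on uninformative-context instances would yield a fixed-lottery algorithm that beats it in GHY's setting. Fix $\beta$ and $c>\ell_\beta/L$. From GHY's construction, take a response-set size $m$, a uniform reference, $\tau=\log m$ (so the KL constraint is nonbinding), and a finite family of utility distributions $\mathcal D_1,\dots,\mathcal D_J$ over $[0,1]^m$. For this family, every algorithm that sees $T$ users, each giving one comparison with adaptively chosen pairs, and outputs a (randomized) lottery $\widehat q$, fails $\E[\widehat q^\top \E_{\mathcal D_j}U]\ge c\max_q q^\top\E_{\mathcal D_j}U$ for some $j$. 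I will state the exact form I need: the guarantee fails for every $T$ and every adaptive querying rule. This is the form GHY prove, since their instances make one-comparison Bradley--Terry observations uninformative about which alternative has high mean utility.

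Next, lift each GHY instance to a contextual instance. Take any context space $\X$ and any context distribution, independent of $U\sim\mathcal D_j$. Take any group structure containing the specified positive-mass group $k$, for instance $K=1$ with $G\equiv1$. Because $U$ is independent of $X$, $\Ek[\pi(X)^\top U]=\bar q_k^\top\mu_j$, where $\mu_j=\E_{\mathcal D_j}U$ and $\bar q_k:=\Ek[\pi(X)]$. Also $\bar q_k\in\simplex=\B_\tau$, and $W_k^\star=\max_q q^\top\mu_j$ coincides with GHY's benchmark. So group $k$'s welfare of any contextual policy equals the welfare of the averaged fixed lottery.

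Then simulate. Suppose a contextual algorithm $\mathcal L$ guaranteed \eqref{eq:impossible-fraction} on every instance. Build a GHY-style algorithm as follows. At each round it samples a fresh context $X_t$ from the known context law using internal randomness and feeds it to $\mathcal L$. It then lets $\mathcal L$ choose the pair and relays the real user's comparison. This is legitimate because, under independence, the comparison law given $(X_t,U_t)$ and the pair depends only on $U_t$, so the simulated transcript has exactly the contextual instance's law. At the end it outputs the lottery $\bar q_k=\Ek[\widehat\pi(X)]$, computed or sampled by drawing a fresh context from group $k$ and then sampling from $\widehat\pi$, with $\widehat\pi$'s own randomization folded into the expectation. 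Its expected welfare then equals $\E[W_k(\widehat\pi)]\ge cW_k^\star$ on every $\mathcal D_j$, which contradicts GHY. Adaptive pair selection, randomized output, and the sample size all carry through, because the simulation preserves them.

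The main obstacle is the first step: extracting from GHY a statement in exactly the needed form. It must hold uniformly in $T$ and against adaptive pair selection, be stated in expectation, and allow fractions $c$ arbitrarily close to $\ell_\beta/L$, possibly only as $m\to\infty$, which is why $m$ is chosen depending on $c$. If GHY's theorem is phrased as a distortion lower bound $\ge L/\ell_\beta-\epsilon$, I will convert it using $c>\ell_\beta/L \iff 1/c<L/\ell_\beta$. I will also check that their argument conditions on the realized transcript, so it covers randomized and adaptive algorithms.
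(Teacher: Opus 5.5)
Your reduction is sound, but it takes a genuinely different route from the paper. The paper does not use GHY's theorem as a black box. Instead it reruns their two-type construction directly on contextual learners. A $\theta_h$-fraction of users have utility $1$ for $a^\circ$ and $0$ elsewhere, and the rest have $0$ for $a^\circ$ and $h$ elsewhere, with $\theta_h$ chosen so that every pairwise margin is zero. Each label is then a fair coin conditional on the history, context, and queried pair. So the transcript and output policy have the same law in all $m$ instances, whatever the pair-selection rule. Pigeonhole gives some $a^\circ$ with $\bar\pi_k(a^\circ)\le1/m$. In that instance the welfare ratio is at most $r_h+(1-r_h)/m<c$, where $r_h\to\ell_\beta/L$ is the relative welfare of the other alternatives.

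You share the averaging step: under independence, a contextual policy's group welfare equals that of $\bar q_k$. But you replace indistinguishability by a simulation that turns any contextual learner into a population learner. That buys modularity, since it would lift \emph{any} population lower bound against adaptive finite-sample learners to every positive-mass group.

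The cost is that all the substance sits in the black box. The features you need from it (every $T$, adaptive pair selection, expectation over randomized outputs) are exactly the ones the theorem itself advertises. If GHY's Theorem~3 is phrased for a fixed pair-sampling distribution or for known population margins, it does not directly cover your simulated adaptive learner. The cheapest repair uses the fact you allude to: on these instances every label is a fair coin. Your simulator can therefore generate the labels itself. By linearity of welfare, its data-free randomized output has the same expected welfare on every instance as the fixed lottery $\E[\bar q_k]$, which their bound must rule out however it is phrased. At that point the simulation is superfluous, which is why the paper argues directly.
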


For completeness, Appendix~\ref{app:barrier} gives the proof using GHY's
indistinguishable instances.

For GLHF, let $\pi_T$ be its output after $T$ comparisons. Choosing
$\delta_T=1/T$ makes $\varepsilon_{T,k}\to0$ on each fixed instance with
$\tau>0$. Nonnegative welfare and
Proposition~\ref{prop:welfare-main} then give
$\liminf_{T\to\infty}\E[W_k(\pi_T)]\ge(\ell_\beta/L)W_k^\star$,
matching the theorem. 

\subsection{Beyond worst-case distortion: preference-aligned groups}
\label{sec:homogeneous}
\label{app:homogeneous}

The worst-case factor in Proposition~\ref{prop:welfare-main} permits arbitrary
preference heterogeneity within every group. A sharper groupwise bound is available
when a group's best fixed feasible lottery nearly attains its members'
individual-best utilities. For each group $k$, fix a welfare-maximizing
feasible lottery $q_k^\star\in\argmax_{q\in\B_\tau}W_k(q)$.
Different groups may have different maximizers.
We quantify the residual welfare loss within group $k$ by
$\mathrm{Res}_k:=\Ek[\max_a U(a)]-W_k^\star\ge0$.
The following result replaces a fixed multiplicative welfare loss by a loss
that shrinks with this residual. It applies separately to every given group,
even when other groups have heterogeneous preferences.

\begin{theorem}[Near-optimal welfare for preference-aligned groups]
\label{prop:homogeneous}
Under Assumption~\ref{ass:bt}, any policy $\pi$ satisfying
$\Expl_k(\pi)\le\varepsilon$ obeys
\begin{equation}
W_k(\pi)\ge W_k^\star-
\frac{\varepsilon}{2\beta\ell_\beta}
-\left(\frac{L}{\ell_\beta}-1\right)\mathrm{Res}_k.
\label{eq:homogeneous}
\end{equation}
In particular, GLHF's output satisfies this bound simultaneously
for every group with $\varepsilon=\varepsilon_{T,k}$ on the event of
Theorem~\ref{thm:certificate}.
\end{theorem}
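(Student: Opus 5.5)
The plan is to rerun the argument of Proposition~\ref{prop:welfare-main} with a sharper pointwise linearization in place of \eqref{eq:asymmetric-link}. That inequality charges the comparator's utility at the small rate $\ell_\beta$, and this causes the multiplicative loss. The sharper version keeps the symmetric term $2\beta\ell_\beta(u-v)$ and pays the excess slope $L-\ell_\beta$ only on the gap between the comparator's utility and the user's best utility. Averaging that gap within group $k$ gives exactly $\mathrm{Res}_k$.

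\emph{Step 1 (pointwise inequality).} For $u,v\in[0,1]$ and any $\bar u\ge\max\{u,v\}$, I will show
\[
2\sigma(\beta(u-v))-1\le 2\beta\ell_\beta(u-v)+2\beta(L-\ell_\beta)(\bar u-v).
\]
Set $d=u-v$ and $f(d)=2\sigma(\beta d)-1$.
\begin{itemize}
\item If $d\ge0$, the tangent bound in \eqref{eq:link-sandwich} gives $f(d)\le2\beta Ld=2\beta\ell_\beta d+2\beta(L-\ell_\beta)d$. Then use $d=u-v\le\bar u-v$ and $L\ge\ell_\beta$.
\item If $d<0$, oddness and the chord bound give $f(d)=-f(-d)\le-2\beta\ell_\beta(-d)=2\beta\ell_\beta d$. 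The extra term is nonnegative because $\bar u\ge v$ and $L\ge\ell_\beta$.
\end{itemize}
We will apply this with $\bar u=\max_aU(a)$.

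\emph{Step 2 (averaging).} Fix $q_k^\star\in\argmax_{q\in\B_\tau}W_k(q)$. As in the proof of Proposition~\ref{prop:welfare-main}, draw $(X,U)$ from group $k$, $A\sim\pi(X)$ and $B\sim q_k^\star$. Since $q_k^\star\in\B_\tau$, the hypothesis $\Expl_k(\pi)\le\varepsilon$ together with antisymmetry of $M$ gives $-\varepsilon\le\Ek[\pi(X)^\top M(X)q_k^\star]=\Ek[f(U(A)-U(B))]$. Applying Step 1 pointwise and taking expectations, using $\Ek[U(A)]=W_k(\pi)$, $\Ek[U(B)]=W_k^\star$ and $\Ek[\max_aU(a)]-W_k^\star=\mathrm{Res}_k$, yields
\[
-\varepsilon\le2\beta\ell_\beta\bigl(W_k(\pi)-W_k^\star\bigr)+2\beta(L-\ell_\beta)\mathrm{Res}_k.
\]
Dividing by $2\beta\ell_\beta>0$ and rearranging gives \eqref{eq:homogeneous}.

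\emph{Step 3 (GLHF).} The final claim follows by plugging $\varepsilon=\varepsilon_{T,k}$ from Theorem~\ref{thm:certificate} on its success event, simultaneously for all $k$.

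The only nontrivial point is finding the right pointwise inequality in Step 1. The key choice is using $\max_aU(a)$ as the anchor, so that the asymmetric slack is measured against the comparator's shortfall from the individual optimum. Once that is set up, the verification is just the tangent and chord bounds already stated in \eqref{eq:link-sandwich}.
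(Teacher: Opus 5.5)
Your proof is correct and is essentially the paper's proof. The paper applies \eqref{eq:asymmetric-link} to the shortfalls $\max_aU(a)-U(B)$ and $\max_aU(a)-U(A)$, and since $L(\bar u-v)-\ell_\beta(\bar u-u)=\ell_\beta(u-v)+(L-\ell_\beta)(\bar u-v)$, this yields exactly your Step~1 inequality with $u=U(A)$, $v=U(B)$, $\bar u=\max_aU(a)$; the only difference is that you check it directly from the tangent and chord bounds, after which the averaging and rearrangement are identical.
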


\begin{proof}
We apply the same linearization inequality as in
Proposition~\ref{prop:welfare-main}, now to shortfalls from each user's
maximum utility.
Draw a user $(X,U)$ from group $k$, then draw
$A\sim\pi(X)$ and $B\sim q_k^\star$, independently of each other and of
$U$ given $X$. Write $U_{\max}:=\max_aU(a)$; all expectations include
the response draws. Apply \eqref{eq:asymmetric-link} with
$u=U_{\max}-U(B)$ and $v=U_{\max}-U(A)$, both in $[0,1]$.
Since $u-v=U(A)-U(B)$, the comparison certificate and antisymmetry give
\begin{align*}
-\varepsilon
&\le\Ek[\pi(X)^\top M(X)q_k^\star]
=\Ek\!\left[2\sigma\!\left(\beta(U(A)-U(B))\right)-1\right]\\
&\le2\beta\,\Ek\!\left[L(U_{\max}-U(B))
                     -\ell_\beta(U_{\max}-U(A))\right]\\
&=2\beta\left[\ell_\beta\big(W_k(\pi)-W_k^\star\big)
             +(L-\ell_\beta)\mathrm{Res}_k\right].
\end{align*}
Rearranging proves \eqref{eq:homogeneous}.
\end{proof}

For GLHF and $W_k^\star>0$, whenever the denominator is positive,
\[
\Dist_k(\piGLHF)
\le\frac{1}{1-\Delta_k-(L/\ell_\beta-1)\mathrm{Res}_k/W_k^\star}.
\]
A small residual can therefore give
a much sharper bound than the worst-case factor $L/\ell_\beta$.
To distinguish preference disagreement from the effect of the constraint, decompose
\begin{equation}
\mathrm{Res}_k=
\underbrace{\Ek[\max_aU(a)]-\max_{q\in\simplex}W_k(q)}_{\text{disagreement within the group}}
+\underbrace{\max_{q\in\simplex}W_k(q)-W_k^\star}_{\text{cost of the KL constraint}}.
\label{eq:residual-welfare-loss}
\end{equation}
Informative groupings can make the first term small; the benchmark policy
and radius determine the second. In the social choice setting, if
members of group $k$ share a favorite alternative, its point mass attains
every member's highest utility and $\mathrm{Res}_k=0$. The same conclusion
holds in the alignment setting whenever that point mass is feasible.
More generally, any fixed feasible lottery supported on each member's
utility-maximizing alternatives has zero residual loss.

For fixed $\beta>0$, the bound approaches distortion one
when both $\Delta_k$ and
$\mathrm{Res}_k/W_k^\star$ vanish. Approximate agreement is thus enough:
each group can have its own nearly optimal feasible lottery, and the
groups may overlap. Appendix~\ref{app:population-transfer} discusses
when such guarantees also control population welfare.

\section{Groupwise Learning from Human Feedback}
\label{sec:algorithm}

GLHF maintains one \emph{specialist} per group: a feasible lottery
$\pi_{t,k}\in\B_\tau$ and a positive outer weight $w_{t,k}$. Before each
training round, these states and their current learning rates define a policy
for every possible context.
The learner then observes one user's context and comparison and updates all
specialists whose groups contain that user. The deployed policy averages the
policies saved before the training rounds. Algorithm~\ref{alg:main} gives
the complete procedure. The outer weights control each specialist's
advantage over the current policy within its group. The inner updates
control the advantage of any fixed feasible lottery over that specialist.
The learning rates decrease as each group accumulates comparison scores.
Utilities, the Bradley--Terry parameter $\beta$, group
masses, and the confidence level are not inputs.

\paragraph{Policy and comparison score.}
For $\tau>0$, write $\alpha_{t,k}\in(0,1/2]$ for the outer learning rate
of group $k$, determined from previous scores by \eqref{eq:rates} below.
Given the state at round $t$, the current policy serves context $x$
according to
\begin{equation}
\pihat_t(x):=
\frac{\sum_{k:G_k(x)=1}\alpha_{t,k}w_{t,k}\pi_{t,k}}
{\sum_{k:G_k(x)=1}\alpha_{t,k}w_{t,k}}.
\label{eq:deployment}
\end{equation}
Every context belongs to at least one group, so the denominator is positive.
The mixture is feasible because the KL ball is convex, even when groups
overlap. After observing $X_t$, let $a$ and $b$ be the winner and loser of the
queried pair, and let $e_c$ be the $c$th coordinate vector. With pair
probability $\rho_t(a,b)$ from \eqref{eq:pair-probabilities} and protocol
scale $R$ from \eqref{eq:protocol-scale}, form
\begin{equation}
\ghat_t:=\frac{\pihat_t(b\mid X_t)e_a-\pihat_t(a\mid X_t)e_b}
{R\rho_t(a,b)}.
\label{eq:estimated-game}
\end{equation}
The pair probability corrects for how the query was selected; $R$
normalizes the score. Let $\F_{t-1}$ denote the training history before
round $t$. Conditional on $\F_{t-1}$ and $X_t$, the score satisfies
\begin{equation}
\begin{gathered}
\pihat_t(X_t)^\top\ghat_t=0,
\qquad
\E[\ghat_t\mid\F_{t-1},X_t]
=\frac{M(X_t)\pihat_t(X_t)}{R},\\
|\ghat_{t,c}|\le1,
\qquad
\E[\ghat_{t,c}^2\mid\F_{t-1},X_t]\le\frac1R
\quad\text{for every }c\in\mathcal A.
\end{gathered}
\label{eq:score-interface}
\end{equation}
Thus the current policy has exactly zero estimated advantage over itself,
while a specialist's score estimates its normalized pairwise advantage over
that policy. Section~\ref{sec:score-moments} derives the second-moment bound
and explains its effect on the active and passive rates;
Appendix~\ref{app:feedback} verifies the other properties, including
the score-span bound
\begin{equation}
\max_a\ghat_{t,a}-\min_a\ghat_{t,a}\le1.
\label{eq:score-span}
\end{equation}

\paragraph{Adapting the learning rates.}
For each group containing the current user (called \emph{active} on that
round), the specialist's mean score and its variance over
alternatives are
\begin{equation}
r_{t,k}:=\pi_{t,k}^\top\ghat_t,
\qquad
c_{t,k}:=\sum_a\pi_{t,k}(a)(\ghat_{t,a}-r_{t,k})^2.
\label{eq:adaptive-moments}
\end{equation}
The outer update uses the scalar score $r_{t,k}$; the inner update uses
differences between alternatives, measured by $c_{t,k}$. Track their
accumulation before round $t$ as
\[
Q_{t,k}:=\sum_{s<t:G_k(X_s)=1}r_{s,k}^2,
\qquad
C_{t,k}:=\sum_{s<t:G_k(X_s)=1}c_{s,k}.
\]
Starting from $Q_{1,k}=C_{1,k}=0$, set
\begin{equation}
\alpha_{t,k}:=\min\left\{\frac12,
\sqrt{\frac{\Lambda_T}{2(1+Q_{t,k})}}\right\},
\qquad
\eta_{t,k}:=\min\left\{1,
\sqrt{\frac{\tau}{2(1/4+C_{t,k})}}\right\},
\label{eq:rates}
\end{equation}
where $\Lambda_T$ is defined in \eqref{eq:adaptive-log-factor}. A group
with small accumulated scores takes larger steps. Section~\ref{sec:analysis}
shows how these inverse-square-root choices balance the regret terms.

\paragraph{Two updates.}
After observing the comparison, each active group adds $r_{t,k}^2$ and
$c_{t,k}$ to its accumulators and computes its next rates. Let
$\gamma_{t,k}^{\rm out}:=\alpha_{t+1,k}/\alpha_{t,k}$ and
$\gamma_{t,k}^{\rm in}:=\eta_{t+1,k}/\eta_{t,k}$, both in $(0,1]$.
The outer update is
\begin{equation}
w_{t+1,k}=(1/K)^{1-\gamma_{t,k}^{\rm out}}
\left[w_{t,k}(1+\alpha_{t,k}r_{t,k})\right]^{\gamma_{t,k}^{\rm out}}.
\label{eq:outer-update}
\end{equation}
The inner update forms a normalized lottery $z_{t,k}$ and projects it:
\begin{equation}
z_{t,k}(a)\propto
\pi_{t,k}(a)^{\gamma_{t,k}^{\rm in}}
\piref(a)^{1-\gamma_{t,k}^{\rm in}}e^{\eta_{t+1,k}\ghat_{t,a}},
\qquad
\pi_{t+1,k}=\argmin_{q\in\B_\tau}\KL(q\|z_{t,k}).
\label{eq:inner-update}
\end{equation}
When a learning rate decreases, the power $\gamma$ pulls the corresponding
state toward its initialization; when it stays fixed, $\gamma=1$ and these
are multiplicative-weight and exponentiated-gradient updates
\citep{gaillard2014,kivinen1997,beck2003}. The power adjustments allow
their regret bounds to telescope despite changing rates
(Appendix~\ref{app:proof-details}). All scores use the states from
the start of the round. Inactive states and rates do not change. The KL
projection reduces to a one-dimensional search (Appendix~\ref{app:projection}).

\begin{algorithm}[htbp]
\caption{Groupwise Learning from Human Feedback (GLHF)}
\label{alg:main}
\label{alg:body}
\begin{algorithmic}[1]
\Require $T\ge2$, covering indicators $G_1,\ldots,G_K$,
full-support reference $\piref$, radius $\tau\ge0$, passive or
active comparison sampling protocol
\If{$\tau=0$}
  \State \Return $\piGLHF(x)=\piref$ for every $x$.
\EndIf
\State Cap $\tau$ at $\log(1/\min_a\piref(a))$ and set $R$ by \eqref{eq:protocol-scale}.
\State Set $\Lambda_T$ by \eqref{eq:adaptive-log-factor}. For every $k$, initialize
$w_{1,k}=1/K$, $\pi_{1,k}=\piref$, $Q_{1,k}=C_{1,k}=0$, and rates by \eqref{eq:rates}.
\For{$t=1,\ldots,T$}
  \State Save the current state, which fixes the policy $x\mapsto\pihat_t(x)$
  by \eqref{eq:deployment}.
  \State Observe $X_t$ and sample a pair using the chosen protocol in
  Section~\ref{sec:model}.
  \State Observe the winner and loser and form $\ghat_t$ by \eqref{eq:estimated-game}.
  \For{each group $k$ with $G_k(X_t)=1$}
    \State Compute $r_{t,k},c_{t,k}$ by \eqref{eq:adaptive-moments}, set
    $Q_{t+1,k}=Q_{t,k}+r_{t,k}^2$, $C_{t+1,k}=C_{t,k}+c_{t,k}$,
    and compute the next rates by \eqref{eq:rates}.
    \State Update $w_{t+1,k}$ and $\pi_{t+1,k}$ by
    \eqref{eq:outer-update}--\eqref{eq:inner-update}.
  \EndFor
  \State Leave inactive groups unchanged.
\EndFor
\State \Return $\displaystyle\piGLHF(x)=\frac1T\sum_{t=1}^T\pihat_t(x)$.
\end{algorithmic}
\end{algorithm}

Each saved policy, and hence their pointwise average, takes values in
$\B_\tau$. At deployment, a new user's group memberships determine the
response lottery $\piGLHF(x)$; no further comparisons are needed. Saving a
state retains the complete map in \eqref{eq:deployment}, so each of the
$T$ policies can be evaluated at the new context, including contexts that
did not appear during training.

\paragraph{Computation.}
On each round, GLHF updates the weight and lottery for each group
containing the sampled user. If a numerical implementation uses at most
$J$ one-dimensional search steps per KL projection, each costing $O(m)$,
the updates take $O(Km(1+J))$ arithmetic operations per round in the worst
case, excluding the cost of evaluating group indicators. The current weights
and lotteries use $O(Km)$ space; saving all $T$ states uses $O(TKm)$ space,
and explicitly evaluating the average policy at a new context costs
$O(TKm)$ operations. To sample a response from that average, it suffices
to choose a saved round uniformly and then sample from its policy at the
new context.
\FloatBarrier

\section{Why the comparison guarantee holds}
\label{sec:analysis}

The proof has two ingredients. First, the sampling protocol determines the
second moment of the comparison score and hence its statistical error.
Second, the outer and inner updates use that score to control each group's
advantage against every fixed feasible lottery. We assume $\tau>0$ throughout
this section.

\subsection{Why adapting the sampled pair helps}
\label{sec:score-moments}

Condition on the history $\F_{t-1}$ and the current context $X_t$, and write
$\pi=\pihat_t(X_t)$ and $\rho(c,d)=\rho_t(c,d)$. Fix a coordinate $c$.
The score $\ghat_{t,c}$ is zero unless the sampled pair contains $c$.
If the pair is $\{c,d\}$, its magnitude is $\pi(d)/(R\rho(c,d))$,
whichever alternative wins. Therefore,
\begin{equation}
\begin{aligned}
\E[\ghat_{t,c}^2]
&=\frac1{R^2}\sum_{d\ne c}\frac{\pi(d)^2}{\rho(c,d)}\\
&=\begin{cases}
\displaystyle\frac1R\sum_{d\ne c}
\frac{\pi(d)^2}{\pi(c)+\pi(d)},&\text{active sampling},\\[1.2ex]
\displaystyle\frac1R\sum_{d\ne c}\pi(d)^2,&\text{passive sampling}
\end{cases}\\
&\le\frac1R\sum_{d\ne c}\pi(d)\le\frac1R.
\end{aligned}
\label{eq:score-second-moment}
\end{equation}
Here we substituted the pair probabilities in \eqref{eq:pair-probabilities}.
All denominators are positive because GLHF's current policy has full support;
Appendix~\ref{app:feasibility-invariant} verifies this invariant.

The estimator $R\ghat_t$ of the margin vector $M(X_t)\pi$ has
coordinate second moments at most $R$. The analysis below converts
this bound into simultaneous groupwise comparison guarantees,
with leading error of order $\sqrt{R/(Tp_k)}$ up to confidence
and complexity factors.
\subsection{Regret on overlapping groups}

One observed comparison supplies a score to every group containing its user.
The proof separates two issues: can a group's specialist outperform the
current policy by much, and can a fixed feasible lottery outperform that
specialist? Fix a group $k$ and a comparator $q\in\B_\tau$. Their cumulative
sampled advantage decomposes as
\begin{equation}
A_{T,k}(q):=\sum_{t:G_k(X_t)=1}q^\top\ghat_t
=\sum_{t:G_k(X_t)=1}\pi_{t,k}^\top\ghat_t
+\sum_{t:G_k(X_t)=1}(q-\pi_{t,k})^\top\ghat_t.
\label{eq:decomposition}
\end{equation}
The outer weights control the first sum; the inner update controls the
second. Their two score accumulators add up to one cumulative second moment:
\begin{equation}
V_{T,k}:=\sum_{t:G_k(X_t)=1}\sum_a\pi_{t,k}(a)\ghat_{t,a}^2
=Q_{T+1,k}+C_{T+1,k}.
\label{eq:second-moments}
\end{equation}
The following lemma summarizes their deterministic guarantees. The
log-weight and KL calculations are in Appendix~\ref{app:proof-details}.
\begin{lemma}[Outer and inner regret]
\label{lem:regret}
All outer weights remain positive. All specialist and current
policy lotteries have full support and lie in $\B_\tau$. For every group $k$
and every $q\in\B_\tau$,
\begin{align}
\sum_{t:G_k(X_t)=1}\pi_{t,k}^\top\ghat_t
&\le\frac{\Lambda_T}{\alpha_{T+1,k}}
+\sum_{t:G_k(X_t)=1}\alpha_{t,k}r_{t,k}^2,
\label{eq:outer-regret}\\
\sum_{t:G_k(X_t)=1}(q-\pi_{t,k})^\top\ghat_t
&\le\frac{\tau}{\eta_{T+1,k}}
+\sum_{t:G_k(X_t)=1}\eta_{t,k}c_{t,k}.
\label{eq:inner-regret}
\end{align}
Consequently,
\begin{equation}
A_{T,k}(q)\le
4\sqrt{(\tau+\Lambda_T)(V_{T,k}+5/4)}+2(\tau+\Lambda_T).
\label{eq:adaptive-regret}
\end{equation}
\end{lemma}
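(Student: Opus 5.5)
I would prove Lemma~\ref{lem:regret} in four steps: feasibility invariants by induction on $t$, a log-weight potential argument for \eqref{eq:outer-regret}, a KL-potential (mirror-descent) argument for \eqref{eq:inner-regret}, and finally the adaptive step sizes to sum the two.

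\textbf{Invariants.} Induct on $t$. Initially $\pi_{1,k}=\piref$ has full support, lies in $\B_\tau$, and $w_{1,k}=1/K>0$.

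For the outer update I use $\alpha_{t,k}\le 1/2$ and $|r_{t,k}|\le 1$. The latter holds since $r_{t,k}$ is a convex combination of coordinates of $\ghat_t$, each in $[-1,1]$ by \eqref{eq:score-interface}. Hence $1+\alpha_{t,k}r_{t,k}\ge 1/2>0$, and a geometric mixture of positive numbers stays positive.

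For the inner update, $z_{t,k}$ has full support because $\piref$ does. The KL projection of a full-support $z$ onto $\B_\tau$ has the form $q\propto z^{\lambda}\piref^{1-\lambda}$ for some $\lambda\in[0,1]$, so it keeps full support. The current policy $\pihat_t(x)$ is a convex combination of the specialists, so it is feasible and full-support by convexity of the ball.

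\textbf{Outer regret.} Use the potential $\Phi_t=\frac{1}{\alpha_{t,k}}\log(K w_{t,k})$, or equivalently track $\log w$ directly. Taking logs of \eqref{eq:outer-update} gives
\[
\log(Kw_{t+1,k})=\gamma^{\rm out}_{t,k}\bigl[\log(Kw_{t,k})+\log(1+\alpha_{t,k}r_{t,k})\bigr].
\]
Dividing by $\alpha_{t+1,k}$ yields
\[
\frac{\log(Kw_{t+1,k})}{\alpha_{t+1,k}}=\frac{\log(Kw_{t,k})}{\alpha_{t,k}}+\frac{\log(1+\alpha_{t,k}r_{t,k})}{\alpha_{t,k}}.
\]
Apply $\log(1+x)\ge x-x^2$ for $x\ge -1/2$ and telescope over the active rounds. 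This gives
\[
\sum_t r_{t,k}\le \frac{\log(Kw_{T+1,k})}{\alpha_{T+1,k}}+\sum_t\alpha_{t,k}r_{t,k}^2.
\]
It then remains to bound $\log(Kw_{T+1,k})\le\Lambda_T$, i.e.\ $w_{T+1,k}\le e^{\Lambda_T}/K$. This is the step I expect to be the main obstacle. A single group's weight is not obviously bounded, so I would show that $\sum_k w_{t,k}$ (or a suitable weighted sum) cannot grow much, using the zero-sum identity $\pihat_t^\top\ghat_t=0$. That identity gives
\[
\sum_{k\text{ active}}\alpha_{t,k}w_{t,k}r_{t,k}=0,
\]
so the linear term $\sum_k w_{t,k}\alpha_{t,k}r_{t,k}$ vanishes. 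The geometric damping by $\gamma^{\rm out}\le 1$ is handled by concavity, $x^\gamma\le \gamma x+1-\gamma$, in the style of Gaillard et al. The $\log(1+\frac12\log(T+1))$ in $\Lambda_T$ suggests a doubling or epoch argument over the number of distinct rate values: $\alpha$ can only shrink through about $\log(1+Q)$ scales, and each scale loses at most a constant factor in the total weight. I would try first to prove that $\sum_k w_{t,k}$ is nonincreasing up to such a per-scale factor, giving $\max_k Kw_{T+1,k}\le K(1+\frac12\log(T+1))\le e^{\Lambda_T}$.

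\textbf{Inner regret.} This is standard optimistic-free exponentiated gradient with a time-varying rate and restart toward $\piref$. Take the potential $D_t=\KL(q\|\pi_{t,k})/\eta_{t,k}$. The generalized Pythagorean inequality for the KL projection onto the convex set $\B_\tau$ gives $\KL(q\|\pi_{t+1,k})\le\KL(q\|z_{t,k})$. Expanding $\KL(q\|z_{t,k})$, the geometric pull toward $\piref$ contributes
\[
(1-\gamma^{\rm in})\,\KL(q\|\piref)\le(1-\gamma^{\rm in})\,\tau,
\]
which telescopes to $\tau/\eta_{T+1,k}$. The log-partition term is bounded via $\log\E_\pi e^{\eta g}\le \eta\pi^\top g+\eta^2\Var_\pi(g)$. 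This needs $\eta(\max g-\min g)\le 1$, which holds by \eqref{eq:score-span} and $\eta\le 1$, and it yields $\eta_{t,k}c_{t,k}$. One subtlety is that the exponent uses $\eta_{t+1,k}$, and $\eta_{t+1,k}\le\eta_{t,k}$ keeps the bound valid.

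\textbf{Combining.} Add \eqref{eq:outer-regret} and \eqref{eq:inner-regret}, and substitute the rates. The standard lemma $\sum_t x_t/\sqrt{a+\sum_{s<t}x_s}\le 2\sqrt{a+\sum_t x_t}$, valid for $x_t\le a$ (here $r^2\le1$ and $c\le1/4$), together with the $\min$ caps, gives two terms:
\[
O\!\bigl(\sqrt{\Lambda_T(1+Q_{T+1,k})}\bigr)+\Lambda_T
\quad\text{and}\quad
O\!\bigl(\sqrt{\tau(1/4+C_{T+1,k})}\bigr)+\tau.
\]
Finally, $\sqrt{a}+\sqrt{b}\le\sqrt{2(a+b)}$ and \eqref{eq:second-moments} produce \eqref{eq:adaptive-regret} after constant bookkeeping.
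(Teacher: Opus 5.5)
Your overall route matches the paper's: induction for the invariants, the scaled log-weight $\log(Kw_{t,k})/\alpha_{t,k}$ telescoped using $\log(1+z)\ge z-z^2$, a KL potential with interpolation toward $\piref$, and inverse-square-root summation. Your uncentered potential $\KL(q\|\pi_{t,k})/\eta_{t,k}$ leads to the same $\KL(q\|\piref)/\eta_{T+1,k}$ term as the paper's centered one. The genuine gap is the step you flag yourself, $\log(Kw_{T+1,k})\le\Lambda_T$, and the mechanism you sketch for it is the wrong one. If the total weight lost a constant multiplicative factor each time $\alpha$ halves, you would only get $\log(Kw_{T+1,k})\le\log K+O(\log T)$, not $\Lambda_T=\log K+O(\log\log T)$.

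The growth is in fact additive, and it follows from tools you already listed. Set $\widetilde w_{t+1,k}=w_{t,k}(1+\alpha_{t,k}r_{t,k})$ and apply your concavity bound to $x=K\widetilde w_{t+1,k}$. This gives $w_{t+1,k}\le\widetilde w_{t+1,k}+(1-\gamma^{\rm out}_{t,k})/K$. The cancellation makes $\sum_k\widetilde w_{t+1,k}=\sum_k w_{t,k}$, so $W_{t+1}\le W_t+K^{-1}\sum_{k\text{ active}}(1-\gamma^{\rm out}_{t,k})$. What is missing is the bound on this cumulative slack. The clipped rate satisfies $\gamma^{\rm out}_{t,k}\ge\sqrt{(1+Q_{t,k})/(1+Q_{t+1,k})}$, so $1-e^{-z}\le z$ gives $1-\gamma^{\rm out}_{t,k}\le\tfrac12\log\frac{1+Q_{t+1,k}}{1+Q_{t,k}}$. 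This telescopes to $\tfrac12\log(1+Q_{T+1,k})\le\tfrac12\log(T+1)$, because $r_{t,k}^2\le1$. Hence $W_{T+1}\le1+\tfrac12\log(T+1)$ and $\log(Kw_{T+1,k})\le\log(KW_{T+1})\le\Lambda_T$.

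A smaller imprecision is in the inner step. The normalizer of $z_{t,k}$ is $\sum_a\pi_{t,k}(a)^{\gamma}\piref(a)^{1-\gamma}e^{\eta_{t+1,k}\ghat_{t,a}}$ with $\gamma=\gamma^{\rm in}_{t,k}$. It is not an expectation under $\pi_{t,k}$ at any rate, and it can exceed $\sum_a\pi_{t,k}(a)e^{\eta_{t+1,k}\ghat_{t,a}}$. So the variance bound does not apply to it directly, and $\eta_{t+1,k}\le\eta_{t,k}$ alone does not repair this. Write $\eta_{t+1,k}=\gamma\eta_{t,k}$ and apply H\"older with exponents $1/\gamma$ and $1/(1-\gamma)$. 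Then the log-normalizer is at most $\gamma\log\sum_a\pi_{t,k}(a)e^{\eta_{t,k}\ghat_{t,a}}\le\eta_{t+1,k}r_{t,k}+\eta_{t+1,k}\eta_{t,k}c_{t,k}$, with the variance bound applied at the \emph{old} rate $\eta_{t,k}\le1$. This is exactly the paper's factorization: a multiplicative-weights step at rate $\eta_{t,k}$, then geometric mixing whose normalizer is at most one by weighted AM--GM. With these two repairs, your combination step goes through. Your summation lemma with constant $2$ (without subtracting $\sqrt a$) is valid when $x_t\le a$, and together with $\sqrt a+\sqrt b\le\sqrt{2(a+b)}$ it yields exactly the constants in \eqref{eq:adaptive-regret}.
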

The key cancellation occurs before adjusting for a change in learning rate.
Let $\widetilde w_{t+1,j}=w_{t,j}(1+\alpha_{t,j}r_{t,j})$ on active
groups and $\widetilde w_{t+1,j}=w_{t,j}$ on inactive groups. Then
\begin{equation}
\sum_j(\widetilde w_{t+1,j}-w_{t,j})
=\sum_{j:G_j(X_t)=1}\alpha_{t,j}w_{t,j}\pi_{t,j}^\top\ghat_t
=0.
\label{eq:mass-cancellation}
\end{equation}
The last equality holds because the weighted mixture in
\eqref{eq:deployment} has zero score by \eqref{eq:score-interface}. Thus the comparison step preserves total weight. The subsequent
learning-rate adjustment can increase the total, but it remains
bounded by $1+\tfrac12\log(T+1)$. This bounds the final log-weight by $\Lambda_T$. The scaled log-weight
$\log(Kw_{t,k})/\alpha_{t,k}$ telescopes to give the outer bound.

For the inner bound, factor \eqref{eq:inner-update} into a
multiplicative-weights step, geometric interpolation toward $\piref$,
and KL projection. Interpolation accounts for rate changes: the potential
$(\KL(q\|\pi_{t,k})-\KL(q\|\piref))/\eta_{t,k}$ telescopes,
leaving the variance term $\eta_{t,k}c_{t,k}$. It starts at zero and ends
at least $-\tau/\eta_{T+1,k}$, yielding \eqref{eq:inner-regret}
(Appendix~\ref{app:inner-step}).

Each bound balances a complexity term divided by the final learning rate
against a sum of learning-rate-weighted squared scores. Inverse-square-root
rates make both terms grow as the square root of the accumulated scores.
The coordinate bound in \eqref{eq:score-interface} gives $r_{t,k}^2\le1$;
the span bound in \eqref{eq:score-span} gives $c_{t,k}\le1/4$. These bounds
explain the offsets and caps in \eqref{eq:rates}.
Combining the bounds using $Q_{T+1,k}+C_{T+1,k}=V_{T,k}$ gives
\eqref{eq:adaptive-regret}. Finally, the coordinate second-moment bound
\eqref{eq:score-second-moment} implies $\E[V_{T,k}]\le Tp_k/R$.
Concentration will replace $V_{T,k}$ by a bound of this order.

\subsection{From sampled scores to population margins}

Write $\Et[\cdot]:=\E[\cdot\mid\F_{t-1}]$. The history fixes the complete
policy $x\mapsto\pihat_t(x)$ before the independent context $X_t$ arrives.
Score unbiasedness therefore gives
$R\Et[G_k(X_t)q^\top\ghat_t]
=p_k\Ek[q^\top M(X)\pihat_t(X)]$,
where the right side evaluates this fixed policy on a fresh group-$k$ user.
The discrepancy between sampled and conditional-expected score vectors is
\begin{equation}
S_k:=\sum_{t=1}^T\bigl(
\Et[G_k(X_t)\ghat_t]-G_k(X_t)\ghat_t\bigr).
\label{eq:martingale}
\end{equation}
The output policy is the pointwise average
$\piGLHF(x)=T^{-1}\sum_{t=1}^T\pihat_t(x)$.
Summing the conditional score identity over rounds and using the definitions
of $A_{T,k}(q)$ and $S_k$ gives the exact population identity
\begin{equation}
Tp_k\Ek[q^\top M(X)\piGLHF(X)]
=R\bigl(A_{T,k}(q)+q^\top S_k\bigr).
\label{eq:population-conversion}
\end{equation}
The identity uses the saved policies at every possible context: it evaluates
each $\pihat_t$ on a fresh user before averaging, just as the output policy
does at deployment.

The KL radius plays two roles: it bounds a challenger's initial divergence
from the reference in the regret analysis, and it limits how much that
challenger can reweight the reference in the concentration argument.
The latter gives one sampling-error bound for every feasible challenger.
Appendix~\ref{app:concentration-proof} controls both $V_{T,k}$ and
$\sup_{q\in\B_\tau}q^\top S_k$ simultaneously for all groups. Substituting
these concentration bounds and the regret bounds with rates
\eqref{eq:rates} into \eqref{eq:population-conversion} gives, on one event
of probability at least $1-\delta$,
\[
\Ek[q^\top M(X)\piGLHF(X)]
\le 8\sqrt{\frac{RH}{Tp_k}}+8\frac{RH}{Tp_k}
=\varepsilon_{T,k}
\qquad\text{for every }k\text{ and }q\in\B_\tau,
\]
where $H=\tau+\Lambda_T+\log(2K/\delta)$ as in \eqref{eq:main-error}.
Taking the supremum over $q$ proves Theorem~\ref{thm:certificate};
Appendix~\ref{app:certificate-proof} contains the constant calculations.

\section{Empirical evaluation}
\label{sec:experiments}
We test whether learning a group-conditioned policy from pairwise comparisons improves welfare relative to learning a single lottery for everyone. We run two exploratory studies: one uses human ratings of four coffees, and the other uses synthetic ratings of six prompt completions generated by an LLM under different persona descriptions. Rescaled ratings serve as cardinal utilities, allowing us to generate training comparisons and evaluate the welfare of learned policies. The learner sees the comparisons and group memberships, but not the ratings.

Code, input datasets, saved results, and figure-generation scripts are available at \artifactlink{}. Appendix~\ref{app:empirical-code} describes the repository contents and reproduction commands.

\subsection{Experimental setup}
\paragraph{Methods and feedback.}
Our primary comparison is \emph{multigroup GLHF} against \emph{batch NLHF}, which estimates pairwise preference margins and finds a constrained maximal lottery to serve the whole population. We also evaluate \emph{pooled GLHF}, the same GLHF implementation with only the universal group, and the untrained uniform reference lottery. Comparing the two GLHF methods measures the benefit of group conditioning within the same learning procedure.

Each learner receives one comparison per training respondent. At the main setting, both GLHF methods use active sampling: the current policy selects the first alternative, and the second is drawn uniformly from the remaining alternatives. Batch NLHF uses passive comparisons, with each unordered pair equally likely. All learners use the same training respondents, although active methods may query different pairs. In the completion study, a respondent is a constructed row of synthetic ratings, as described below.

\paragraph{Constraints and comparison noise.}
The reference lottery is uniform. We set the learner radius $\tau_L$ and the fixed-lottery benchmark radius $\tau_E$ to $0.5$; these constrain each served lottery and each group's benchmark, respectively. At the main setting, comparisons follow rating order with random tie-breaking ($\beta=\infty$).  At this limit, the worst-case distortion factor $L/\ell_\beta$ diverges, and so the theory promises nothing; but as we will see the algorithm still performs well. We also give comparisons at finite $\beta$ values. 

The implementation uses the adaptive updates in Algorithm~\ref{alg:main}, choosing learning rates from observed scores without population group masses or a confidence parameter. Appendix~\ref{app:empirical-rates} gives numerical details. Section~\ref{sec:empirical-noise} evaluates finite Bradley--Terry noise, and Appendix~\ref{app:empirical-robustness} varies the learner radius.

\paragraph{Evaluation and group selection.}
For each held-out group, empirical distortion is
\begin{equation}
\widehat{\Dist}_k(\pi)=
\frac{\text{welfare of best fixed feasible lottery for group }k}
{\text{welfare of deployed policy }\pi\text{ on group }k}.
\label{eq:empirical-distortion-summary}
\end{equation}
We compute both quantities from the held-out ratings. Every group evaluates the same deployed policy against its own best fixed feasible lottery. A distortion of one means the policy matches that benchmark. A group-conditioned policy may achieve distortion below one by serving different lotteries to different members of the group.

Both studies reserve about 10\% of the rating rows for choosing groups, including groups whose preferences differ from the population average and groups whose preferences are more similar to it. Coffee groups were chosen by judgment, informed by the reserved sample; the synthetic study uses a fixed selection rule. Appendix~\ref{app:empirical-data} gives the definitions and selection procedure, including our earlier exposure to the Coffee data. The selected groups remain fixed during ten repeats of ten-fold cross-validation on the remaining data, with 50 learner seeds per split. Thus each fit trains on nine folds and evaluates welfare on the tenth. Worst-group distortion takes the maximum across groups within each run before averaging. Paired differences compare methods on the same split and seed. We report approximate 95\% intervals corrected for overlapping folds, conditional on the selected groups; Appendix~\ref{app:empirical-inference} gives the calculation.

\subsection{Coffee-rating study}
The Great American Coffee Taste Test~\citep{coffee2024} provides ratings of four coffees from 3,369 respondents after setting aside our group discovery sample. We evaluate six overlapping groups: the universal group and groups defined by age, coffee expertise, gender, having children, and working from home (Appendix~\ref{app:empirical-data}).

\begin{figure}[tbp]
\centering
\includegraphics[width=0.92\linewidth]{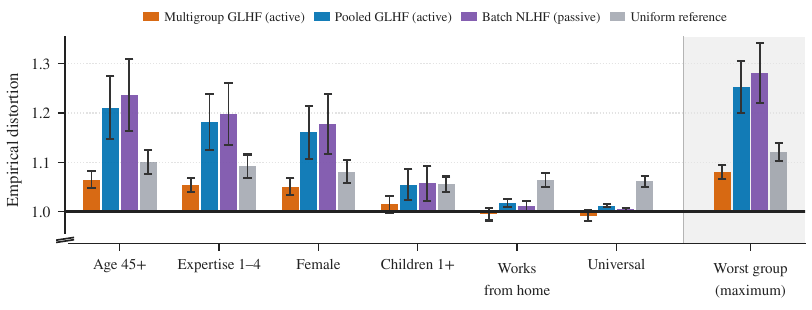}
\caption{Empirical distortion in the coffee-rating study at $\beta=\infty$ and $\tau_L=\tau_E=0.5$. Multigroup GLHF has the lowest estimated distortion in every selected group. The line at one marks each group's best fixed feasible lottery; lower is better. The rightmost cluster averages the within-run group maximum. Whiskers are approximate 95\% confidence intervals corrected for overlapping cross-validation folds.}
\label{fig:coffee-group-bars}
\end{figure}

Multigroup GLHF has lower estimated distortion than all three comparators in every group (Figure~\ref{fig:coffee-group-bars}). Mean within-run worst-group distortion is 1.08 for multigroup GLHF, compared with 1.28 for batch NLHF, 1.25 for pooled GLHF, and 1.12 for the uniform reference. The paired reduction relative to batch NLHF is $0.202$ (corrected 95\% interval $[0.151,0.253]$). Table~\ref{tab:empirical-anchor} reports paired differences for all three comparisons. Both population-wide learners raise population welfare relative to the reference lottery but lower welfare for several subgroups. As a result, the uniform reference has lower worst-group distortion than either population-wide learner. 

\subsection{Completion-rating study}
Claude Haiku 4.5 rated six fixed answers to \emph{Why is the sky blue?} under sixteen persona prompts crossing age, education, technical background, and preferred writing style. Each answer was rated separately. Independently drawn ratings within each persona were combined to form complete vectors of six utilities, which we call \emph{pseudo-respondents}. After reserving the discovery sample, the analysis contains 1,497 pseudo-respondents. Appendix~\ref{app:empirical-data} details the construction and group selection, and Appendix~\ref{app:persona-materials} gives the prompts and answers.

Mean within-run worst-group distortion is 1.07 for multigroup GLHF, compared with 1.36 for batch NLHF, 1.35 for pooled GLHF, and 1.74 for the uniform reference. The paired reduction relative to batch NLHF is $0.295$ $[0.282,0.308]$. The uniform reference performs much worse than in the coffee-rating study, in part because it puts one third of its mass on two factually incorrect answers that were rarely most preferred. Appendix~\ref{app:persona-results} reports the groupwise results.

\subsection{Finite comparison noise}
\label{sec:empirical-noise}
We also generate comparisons from the finite-noise Bradley--Terry model in Assumption~\ref{ass:bt}: alternative $a$ wins against $b$ with probability $\sigma(\beta[U(a)-U(b)])$. We hold the ratings and groups fixed, with $\tau_L=\tau_E=0.5$. Figure~\ref{fig:empirical-beta} shows $\beta\in\{1/2,1,2,4,\infty\}$ for passive GLHF and batch NLHF; active GLHF is evaluated at $4$ and $\infty$.

At $\beta=4$, active multigroup GLHF's mean worst-group distortion is $1.093$ in the coffee-rating study, versus $1.227$ for active pooled GLHF and $1.267$ for batch NLHF. In the completion-rating study the corresponding values are $1.059$, $1.318$, and $1.316$. The gains from group conditioning therefore persist when labels follow the finite-noise model used by the welfare analysis. Using the same passive comparison log, multigroup GLHF also outperforms passive pooled GLHF and batch NLHF, with distortions $1.105$ and $1.103$ in the two studies. This comparison separates the gains from the choice of sampling protocol.

Across the full noise grid, multigroup GLHF has lower mean worst-group distortion than the trained population-wide comparators in every tested setting, although passive multigroup's coffee estimates are slightly worse than uniform's at $\beta=1/2$ and $1$. Less label noise does not always improve welfare; active multigroup's completion-rating distortion is slightly lower at $\beta=4$ than at $\infty$ ($1.059$ versus $1.069$).

\begin{figure}[tbp]
\centering
\includegraphics[width=\linewidth]{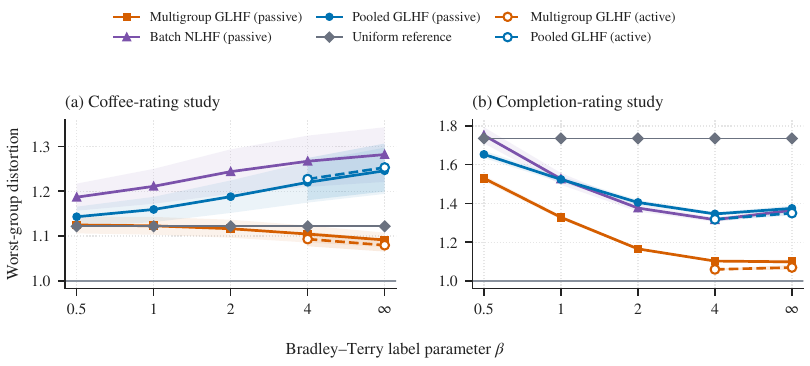}
\caption{Worst-group distortion under finite and deterministic comparison feedback, with $\tau_L=\tau_E=0.5$. Multigroup GLHF has the lowest mean distortion among the trained methods at every evaluated noise level in both studies. Larger $\beta$ makes labels more reliably follow rating order; at $\infty$, only ties are randomized. Passive GLHF and batch NLHF cover the full grid; active GLHF covers $4$ and $\infty$. Bands are corrected 95\% intervals for individual means, not simultaneous intervals over methods or noise levels.}
\label{fig:empirical-beta}
\end{figure}

\subsection{Active and passive sampling}
\label{sec:empirical-trajectories}
We next compare active and passive multigroup GLHF, holding the groups, rating data, and KL radii fixed. At $\beta=\infty$ and $\tau_L=\tau_E=0.5$, worst-group distortion falls from 1.0913 under passive sampling to 1.0795 under active sampling in the coffee-rating study, and from 1.0986 to 1.0695 in the completion-rating study. The paired reductions are $0.0118$ (corrected 95\% interval $[0.0071,0.0165]$) and $0.0291$ ($[0.0255,0.0327]$), respectively. This additional improvement is smaller than the gains over the population-wide learners.

Figure~\ref{fig:empirical-learning-curves} shows how held-out worst-group distortion changes as comparisons accumulate. At checkpoint $t$, we deploy the average of the first $t$ saved policies, each recorded before its comparison, and evaluate against the same held-out benchmark. The zero-comparison policy is uniform. Learning rates retain the planned full training horizon, so a checkpoint is a prefix of one full-length run, not a separately trained shorter run.

\begin{figure}[tbp]
\centering
\includegraphics[width=\linewidth]{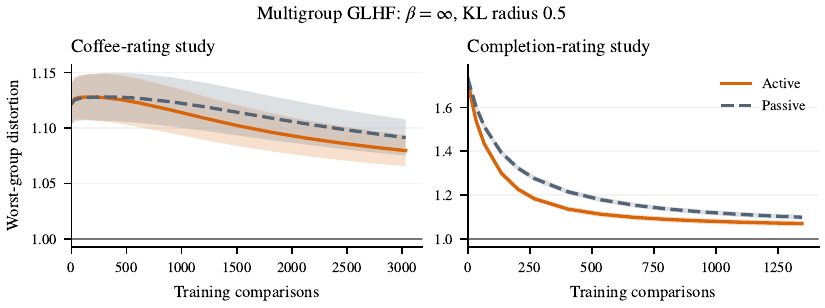}
\caption{Active and passive multigroup GLHF at $\beta=\infty$ and $\tau_L=\tau_E=0.5$. Active sampling gives lower final worst-group distortion in both studies, although early coffee checkpoints do not consistently favor it. Each checkpoint deploys the average of the first $t$ policies from a full-length adaptive run.  Shading gives corrected 95\% intervals at individual checkpoints. Intervals for the completion-rating study are narrow at this scale; vertical scales differ.}
\label{fig:empirical-learning-curves}
\end{figure}

In the completion-rating study, the active checkpoint at 674 comparisons has mean distortion 1.0976, close to passive GLHF's final 1.0986. The intervals apply to individual checkpoints and do not establish a uniform advantage along either curve. The implemented protocols differ in both pair selection and score normalization: active sampling uses $R=m-1$ in \eqref{eq:estimated-game}, whereas passive sampling uses $R=\binom m2$. These experiments compare the complete protocols on fixed sets of four and six alternatives; they do not measure the asymptotic dependence on $m$.

\FloatBarrier
\section{Related work}
\label{sec:related}

Social-choice approaches to alignment~\citep{conitzer2024} and pluralistic
alignment~\citep{sorensen2024} study how to accommodate diverse preferences.
We study a strong form of this goal: every prespecified group should receive
welfare comparable to its own best feasible fixed lottery, even when groups
overlap and have different optimal lotteries.

\paragraph{Population welfare and distortion.}
Distortion quantifies the welfare lost when we must optimize using only ordinal preferences
rather than cardinal utilities~\citep{procaccia2006,boutilier2015};
\citet{anshelevich2021} survey this literature. Under individual Bradley--Terry
comparisons, \citet{golz2025} establish the population benchmark used here.
Their Theorem~7 bounds NLHF's asymptotic distortion by $L/\ell_\beta$ for
every reference lottery and KL budget, relative to the best feasible lottery.
Their Theorem~3 proves a matching worst-case lower bound with one comparison
per user, using unconstrained instances with sufficiently many alternatives.
We apply their welfare conversion within each group and separately derive
finite-sample comparison guarantees and refinements for preference-aligned
groups. 

Recent work by \citet[Theorem~4.1]{oko2026} analyzes RLHF with
\emph{reward clipping}, which truncates the fitted rewards before policy
optimization. When both alternatives in each comparison are drawn
independently from the reference lottery, their analysis gives
$O(\beta)$ distortion for the same Bradley--Terry parameter $\beta\ge1$,
in the limit of infinitely many comparisons for reward fitting. This means
the learned lottery's welfare is at least a constant times $1/\beta$ of
the best feasible lottery's welfare. The bound matches the large-$\beta$
order of the optimal NLHF factor $L/\ell_\beta\sim\beta/2$, up to
constants, rather than establishing the same exact factor.
These results concern one lottery serving the whole population. Our paper
gives finite-sample guarantees simultaneously for arbitrary prespecified,
possibly intersecting groups of users.

\paragraph{Learning with overlapping-group guarantees.}
Early work on subgroup fairness emphasized that guarantees for coarse
demographic groups can mask large errors on structured subgroups. One line sought 
approximately equal positive-prediction rates or false-positive rates across
rich, overlapping group families~\citep{kearns2018}. Another (multicalibration) required
predicted probabilities to match outcome frequencies within each subgroup:
among members assigned a given probability, the outcome should occur at
approximately that rate~\citep{hebertjohnson2018}. 

Multigroup supervised learning asks for a single predictor whose loss on
each group is close to that of the group's own best predictor in a specified
class~\citep{rothblum2021}. \citet{blum2020} obtain analogous guarantees for
sequential prediction through \emph{sleeping experts}: a group's predictors
participate only on examples belonging to that group.
\citet{tosh2022} and \citet{globusharris2022} give simple, efficient algorithms
for obtaining these guarantees from independent training examples. \citet{acharya2024} give a simple, efficient reduction from groupwise to individual guarantees in the online setting. GLHF instantiates the architecture of their reduction with a learner over feasible
lotteries for each group.

\paragraph{Personalized and group-conditioned preference learning.}
Group Preference Optimization trains a preference predictor across several
groups. Given a small number of examples from a target group, including one
not seen during training, it predicts that group's preferences and uses
those predictions to guide model responses~\citep{zhao2024gpo}. Its focus is
adaptation from limited group-specific data, rather than a welfare guarantee
relative to each group's best feasible lottery.

Group Robust Preference Optimization (GRPO) conditions a policy on a group
label and minimizes the largest group-specific preference-training
loss~\citep{ramesh2024}. It uses Direct Preference Optimization (DPO) or
Identity Preference Optimization (IPO), which train policies directly from
preferred and rejected response pairs. Writing $L_k^{\mathrm{DPO}}(\pi)$ for
the average DPO training loss on group $k$, its DPO objective is
\[
    \min_{\pi}\max_{k\in[K]} L_k^{\mathrm{DPO}}(\pi).
\]
GRPO adaptively increases the weight on groups with larger losses. This
controls the largest group training loss, not each group's excess loss above
its own optimum: a group below the maximum can still be far from its own
best achievable loss.

GLHF gives simultaneous cardinal-welfare and distortion bounds relative to
\emph{each group's own} best feasible fixed lottery, under individual
Bradley--Terry comparisons. Thus the benchmark is group-specific, rather
than a single bound on the largest training loss; the two approaches also
measure performance differently, using welfare and preference-training loss,
respectively.

Other approaches exploit structure shared across users' rewards.
\citet{park2024} learn personalized reward models by sharing representations
or clustering users, with sample-complexity guarantees, and also study
preference aggregation. \citet{zhong2026} estimate stakeholder-specific
rewards from previously collected, stakeholder-labeled comparisons, then
choose one collective policy to optimize a welfare objective. Their analysis
assumes shared low-dimensional reward structure and sufficient data coverage
to evaluate candidate policies. Projection Optimization combines common
reward objectives using different group-specific weights, seeking consensus
or optimizing an objective aggregated across groups~\citep{xiong2025}.
GLHF's welfare bounds allow arbitrary utility variation within and across
groups under the individual Bradley--Terry model; they do not require a
shared parametric reward representation.

\paragraph{Other ways to represent or protect heterogeneous preferences.}
Distributional preference learning predicts distributions of reward scores
to capture disagreement arising from unobserved user characteristics or
evaluation criteria~\citep{siththaranjan2024}. \citet{halpern2025} learn a
collection of reward functions so that, for each response pair, the fraction
of functions preferring one response matches the fraction of annotators
preferring it, without annotator identities or predefined groups.
\citet{kim2026ppa} seek a shared lottery that represents groups with different
favorite alternatives in proportion to their population shares, while
limiting the benefit of misreporting preferences. These approaches represent
population diversity; GLHF uses observed memberships to give conditional
welfare guarantees.

Closer to welfare protection, \citet{latham2026} optimize target-population
welfare, penalized for deviation from a reference policy, subject to
specified minimum welfare levels for protected groups.
They learn from previously collected comparisons supplied by separate
population and group feedback sources, assuming linear Bradley--Terry
reward models. Their protected-group guarantees concern these absolute
welfare floors.
GLHF instead compares every named group's welfare with its own best feasible
fixed lottery, allowing heterogeneous utilities within each group.

\section*{Acknowledgements}
AR is supported in part by the Simons Collaboration on Algorithmic Fairness,
NSF grant IIS-2551608, and the NSF ENCoRE TRIPODS institute.

We used OpenAI's Codex and Anthropic's Claude to assist with mathematical
analysis, writing and revision, and the development of experimental code.
The authors take responsibility for the final manuscript, including all
proofs and empirical results.

\bibliographystyle{plainnat}
\bibliography{references}
\clearpage
\appendix
\section{Worst-case welfare lower bound}
\label{app:barrier}

\begin{proof}[Proof of Theorem~\ref{thm:barrier}]
We give the two-type construction of
\citet[Theorem~3 and Appendix~E.3]{golz2025}, then allow contextual
deployment. Fix an alternative $a^\circ$ and a small $h\in(0,1)$. Let
\[
s:=\sigma(\beta)-\tfrac12,
\qquad t_h:=\sigma(\beta h)-\tfrac12,
\qquad \theta_h:=\frac{t_h}{s+t_h}.
\]
With probability $\theta_h$, a user's utility is $1$ for $a^\circ$ and $0$
for all other alternatives. Otherwise, it is $0$ for $a^\circ$ and $h$
for all others. Every pairwise comparison is a fair coin: alternatives
other than $a^\circ$ have equal utility, and the margin of $a^\circ$
against any other alternative is
$2\theta_hs-2(1-\theta_h)t_h=0$.
The welfare of $a^\circ$ is $\theta_h$, while the welfare of every other
alternative, as a fraction of this, is
\[
r_h:=\frac{(1-\theta_h)h}{\theta_h}
=\frac{hs}{t_h}
\longrightarrow\frac{\ell_\beta}{L}
\quad\text{as }h\downarrow0.
\]
Strict concavity of $\sigma$ on the positive half-line gives $r_h<1$,
so $a^\circ$ is the welfare maximizer. Choose $h$ small enough and $m$
large enough that $r_h+(1-r_h)/m<c$.

Give all $m$ instances, one for each choice of $a^\circ$, the same context
distribution, independent of utilities. Each label is then a fair coin
conditional on the history, context, and queried pair. Hence the training
transcript and output policy have the same law in all instances, even with
adaptive pair selection. For a specified positive-mass group $k$, let
$\bar\pi_k:=\E[\Ek[\widehat\pi(X)]]$ average the output over fresh group
contexts, training data, and algorithmic randomness. Some alternative has
$\bar\pi_k(a^\circ)\le1/m$. In its corresponding instance, independence
of context and utility gives
\[
\frac{\E[W_k(\widehat\pi)]}{W_k^\star}
=\bar\pi_k(a^\circ)+(1-\bar\pi_k(a^\circ))r_h
\le r_h+\frac{1-r_h}{m}<c.
\]
The uniform reference and $\tau=\log m$ make every lottery feasible, so
$W_k^\star=\theta_h$. This proves the finite-sample claim. The finite
family of instances is fixed across sample sizes; for any sequence of
algorithms, at least one instance satisfies the same strict upper bound
at infinitely many sample sizes. Thus a limiting-inferior guarantee of
$c$ is impossible as well.
\end{proof}

\section{The comparison score}
\label{app:feedback}

This appendix verifies the unbiasedness, exact cancellation, and coordinate
bound used in Section~\ref{sec:algorithm}. Section~\ref{sec:score-moments}
derives the second-moment bound for both sampling protocols.

\begin{proposition}[Comparison-score properties]
\label{prop:estimator}
Under either sampling protocol, the score $\ghat_t$ in
\eqref{eq:estimated-game} satisfies \eqref{eq:score-interface} and
\eqref{eq:score-span}, with $R$
given by \eqref{eq:protocol-scale}.
\end{proposition}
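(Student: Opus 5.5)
We verify the four properties in \eqref{eq:score-interface} and the span bound \eqref{eq:score-span} by direct computation, conditioning throughout on $\F_{t-1}$ and $X_t$. Write $\pi=\pihat_t(X_t)$ and $\rho=\rho_t$. By Lemma~\ref{lem:regret}, $\pi$ has full support, so $\rho(a,b)>0$ for every pair and $\ghat_t$ is well defined.

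\emph{Exact cancellation.} If $a$ beats $b$, then
\[
\pi^\top\ghat_t=\frac{\pi(a)\pi(b)-\pi(b)\pi(a)}{R\rho(a,b)}=0 .
\]
This holds pathwise, not only in expectation.

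\emph{Unbiasedness.} Fix a coordinate $c$. Only pairs $\{c,d\}$ with $d\ne c$ contribute. Conditional on querying $\{c,d\}$, we have $\ghat_{t,c}=\pi(d)/(R\rho(c,d))$ if $c$ wins and $-\pi(d)/(R\rho(c,d))$ if $d$ wins. Pair selection is independent of $U_t$ given the history and $X_t$, and the winner law depends only on $(X_t,U_t)$ and the pair. So, after integrating over $U_t$ given $X_t$, the conditional expected sign is $\Pr(c\succ d\mid X_t)-\Pr(d\succ c\mid X_t)=M_{cd}(X_t)$. Multiplying by $\rho(c,d)$ and summing over $d$ gives
\[
\E[\ghat_{t,c}]=\sum_{d\ne c}\frac{M_{cd}(X_t)\,\pi(d)}{R}=\frac{(M(X_t)\pi)_c}{R},
\]
using $M_{cc}=0$. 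The one point to state carefully is that the independence of pair choice and $U_t$ is what allows the pair probability to factor out of the margin.

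\emph{Coordinate bound.} We need $\pi(d)\le R\rho(c,d)$ for every pair.
\begin{itemize}
\item Under active sampling, $R\rho(c,d)=\pi(c)+\pi(d)\ge\pi(d)$.
\item Under passive sampling, $R\rho(c,d)=1\ge\pi(d)$.
\end{itemize}
Hence $|\ghat_{t,c}|\le1$ in both cases.

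\emph{Second moment.} This is already computed in \eqref{eq:score-second-moment}, which gives $\E[\ghat_{t,c}^2]\le 1/R$.

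\emph{Span bound.} This is the only step that needs more than one line. The vector $\ghat_t$ has at most two nonzero entries: $x:=\pi(b)/(R\rho)\ge0$ at the winner $a$ and $-y:=-\pi(a)/(R\rho)\le0$ at the loser $b$. All other entries are $0$. So the span is at most $x+y=(\pi(a)+\pi(b))/(R\rho(a,b))$.
\begin{itemize}
\item Under active sampling, this equals exactly $1$.
\item Under passive sampling, it equals $\pi(a)+\pi(b)\le1$.
\end{itemize}
Note that it is the sum $x+y$, not each entry separately, that must be controlled. The individual coordinate bound alone would only give a span of $2$, so the protocol-specific normalization by $R$ is exactly what makes the span at most $1$.
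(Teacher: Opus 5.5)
Your proposal is correct and takes essentially the same route as the paper's proof. Both use pathwise cancellation of the two nonzero entries, conditioning on the sampled pair with pair--utility independence for unbiasedness, the identity $R\rho(c,d)=\pi(c)+\pi(d)$ (active) or $1$ (passive) for the coordinate bound, and the span read off the winner and loser entries as $(\pi(a)+\pi(b))/(R\rho(a,b))$. You cite Lemma~\ref{lem:regret} for full support where the paper cites the feasibility induction of Appendix~\ref{app:feasibility-invariant}; either works, since full support of $\pihat_t$ depends only on earlier rounds.
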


\begin{proof}
Condition throughout on $(\F_{t-1},X_t)$, and write
$\pi=\pihat_t(X_t)$ and $\rho(c,d)=\rho_t(c,d)$. The lottery $\pi$ is
fixed. By Appendix~\ref{app:feasibility-invariant}, it has full support, so
$\rho(c,d)>0$ for every pair of distinct alternatives under either protocol.
Since $\rho(c,d)$ is the probability of the unordered pair $\{c,d\}$,
we also have $\rho(c,d)=\rho(d,c)$.
Label the observed winner $a$ and loser $b$, as in \eqref{eq:estimated-game}.
The two nonzero coordinates of $\ghat_t$ give the exact cancellation
\[
\pi^\top\ghat_t
=\frac{\pi(a)\pi(b)-\pi(b)\pi(a)}{R\rho(a,b)}=0.
\]

Fix a coordinate $c$. If the sampled pair is $\{c,d\}$, then
$\ghat_{t,c}=\pi(d)/(R\rho(c,d))$ when $c$ wins and
$\ghat_{t,c}=-\pi(d)/(R\rho(c,d))$ when $d$ wins. The pair is selected
independently of $U_t$ given the history and $X_t$. Hence, conditional on
sampling $\{c,d\}$, the probability that $c$ wins minus the probability
that $d$ wins is $M_{cd}(X_t)$, and
\[
\begin{aligned}
\E[\ghat_{t,c}]
&=\sum_{d\ne c}\rho(c,d)
  \frac{\pi(d)}{R\rho(c,d)}M_{cd}(X_t)\\
&=\frac1R\sum_{d\ne c}M_{cd}(X_t)\pi(d)
=\frac{(M(X_t)\pi)_c}{R}.
\end{aligned}
\]
The coordinate $\ghat_{t,c}$ is zero unless $c$ is in the sampled pair.
If the sampled pair is $\{c,d\}$, then \eqref{eq:pair-probabilities} gives
$|\ghat_{t,c}|=\pi(d)/(\pi(c)+\pi(d))$ under active sampling and
$|\ghat_{t,c}|=\pi(d)$ under passive sampling. Both are at most one.
The maximum and minimum score coordinates occur at the winner and loser,
so the score span is $(\pi(a)+\pi(b))/(R\rho(a,b))$. This equals one
under active sampling and is at most one under passive sampling, proving
\eqref{eq:score-span}.
Finally, \eqref{eq:score-second-moment} establishes
$\E[\ghat_{t,c}^2]\le1/R$ under the same conditioning.
\end{proof}

\section{KL projection}
\label{app:projection}

An inner learner's exponentiated step in \eqref{eq:inner-update} may leave
$\B_\tau$. Here we show how to project it back with a one-dimensional search
and prove the KL projection inequality used in the regret analysis.

The map $q\mapsto\KL(q\|\piref)$ is continuous and convex on the
simplex, so $\B_\tau$ is compact, convex, and contains $\piref$.
Thus mixtures of feasible lotteries remain feasible. The radius cap in
Section~\ref{sec:model} follows from
\[
\KL(q\|\piref)
=\sum_aq(a)\log q(a)-\sum_aq(a)\log\piref(a)
\le\log\frac1{\min_a\piref(a)},
\]
with equality at a point mass on an alternative of minimum reference
probability. The KL projection inequality is standard
\citep[Theorem~3.1]{csiszarshields2004}. We give a direct proof and a
one-dimensional way to find the projection.

For a full-support $v\in\simplex$, consider the geometric path
\begin{equation}
q_s(a):=\frac{\piref(a)^{1-s}v(a)^s}{Z_s},
\qquad
Z_s:=\sum_b\piref(b)^{1-s}v(b)^s,
\qquad s\in[0,1].
\label{eq:projection-path}
\end{equation}
At $s=0$ this is $\piref$, at $s=1$ it is $v$, and every $q_s$ has full support.

\begin{proposition}[Exact projection]
\label{prop:projection}
If $v\in\B_\tau$, its KL projection onto $\B_\tau$ is $v$. If $v\notin\B_\tau$ and $\tau>0$, there is a unique $s_\star\in(0,1)$ satisfying $\KL(q_{s_\star}\|\piref)=\tau$, and
\begin{equation}
\argmin_{q\in\B_\tau}\KL(q\|v)=q_{s_\star}.
\label{eq:exact-projection}
\end{equation}
At $\tau=0$, the projection is $\piref$.
Writing $p$ for this projection, every feasible $q$ satisfies
\begin{equation}
\KL(q\|p)+\KL(p\|v)\le\KL(q\|v).
\label{eq:pythagoras}
\end{equation}
\end{proposition}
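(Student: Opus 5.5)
The plan is a Lagrangian (I-projection) argument along the geometric path \eqref{eq:projection-path}, followed by a direct verification of \eqref{eq:pythagoras} from the resulting log-linear form of the projection.

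\emph{Trivial cases.} If $v\in\B_\tau$, then $\KL(q\|v)\ge0$ with equality only at $q=v$, so the projection is $v$. If $\tau=0$, then $\B_0=\{\piref\}$. In both cases \eqref{eq:pythagoras} is immediate: in the first case $\KL(p\|v)=0$ and $p=v$, and in the second the only feasible $q$ is $p$.

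\emph{Existence and uniqueness of $s_\star$.} Define $\phi(s):=\KL(q_s\|\piref)$. Write $q_s\propto\piref\,e^{s\psi}$ with $\psi:=\log(v/\piref)$, so the path is an exponential family with log-partition $A(s)=\log Z_s$. Then $\phi(s)=sA'(s)-A(s)$ and $\phi'(s)=sA''(s)=s\Var_{q_s}(\psi)$. If $\psi$ were constant we would have $v=\piref$, which lies in the ball, so $\psi$ is nonconstant, and under a full-support $q_s$ its variance is positive. Hence $\phi$ is continuous and strictly increasing on $[0,1]$, with $\phi(0)=0<\tau<\KL(v\|\piref)=\phi(1)$. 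By the intermediate value theorem there is a unique $s_\star\in(0,1)$ with $\phi(s_\star)=\tau$.

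\emph{Pythagorean inequality.} Set $p=q_{s_\star}$. From the path, $\log(p/v)=(1-s_\star)\log(\piref/v)-\log Z_{s_\star}$, and equivalently $\log(p/v)=-\tfrac{1-s_\star}{s_\star}\log(p/\piref)+c$ for a constant $c$. The key identity, valid for any $q$ in the simplex, is
\[
\KL(q\|v)-\KL(q\|p)-\KL(p\|v)=\sum_a(q(a)-p(a))\log\frac{p(a)}{v(a)}.
\]
Substituting the log-linear form, the constant $c$ cancels because $\sum_a(q(a)-p(a))=0$. With $\lambda:=(1-s_\star)/s_\star>0$, the right side becomes $\lambda\sum_a(p(a)-q(a))\log(p(a)/\piref(a))$.

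It remains to show this term is nonnegative, that is, $\sum_aq(a)\log(p(a)/\piref(a))\le\KL(p\|\piref)=\tau$. Write the left side as
\[
\sum_aq(a)\log\frac{p(a)}{\piref(a)}=\KL(q\|\piref)-\KL(q\|p).
\]
For feasible $q$ we have $\KL(q\|\piref)\le\tau$ and $\KL(q\|p)\ge0$, so the left side is at most $\tau$, as needed. This proves \eqref{eq:pythagoras}.

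\emph{Identification of the minimizer.} Applying \eqref{eq:pythagoras} gives $\KL(q\|v)\ge\KL(p\|v)$ for every feasible $q$, with equality only if $\KL(q\|p)=0$, i.e.\ $q=p$. This yields \eqref{eq:exact-projection} together with uniqueness of the projection. The case $q(a)=0$ needs no separate treatment, since those summands vanish and all $\log(p/\cdot)$ terms are finite because $p$ has full support.

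The main obstacle is the sign bookkeeping in the identity $\KL(q\|v)-\KL(q\|p)-\KL(p\|v)=\sum_a(q-p)\log(p/v)$ and the rewriting of $\log(p/v)$ in terms of $\log(p/\piref)$. The crucial point is that $s_\star<1$ makes $\lambda$ positive. Strict monotonicity of $\phi$ is routine once it is phrased through the exponential-family variance.
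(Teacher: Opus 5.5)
Your proof is correct and follows essentially the same route as the paper. You use the same trivial cases, the same exponential-family monotonicity argument ($\frac{\mathrm d}{\mathrm ds}\KL(q_s\|\piref)=s\Var_{q_s}(\psi)$) to obtain $s_\star$, and the same combination of $\KL(p\|\piref)=\tau\ge\KL(q\|\piref)$ with the log-linear form of $p=q_{s_\star}$. Your three-point identity with $\lambda=(1-s_\star)/s_\star$ is just a rearrangement of the paper's path identity \eqref{eq:path-identity} at $s=s_\star$, and keeping the $\KL(q\|p)$ term you discarded would recover the paper's sharper factor $1/s_\star$.
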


\begin{proof}
If $v\in\B_\tau$, its projection is $v$; if $\tau=0$, the feasible set is
$\{\piref\}$. In either case \eqref{eq:pythagoras} holds with equality.
Otherwise, write
$h_a=\log(v(a)/\piref(a))$ and
$A(s)=\log\sum_a\piref(a)e^{sh_a}$.  Then
$q_s(a)=\piref(a)e^{sh_a-A(s)}$.  Direct differentiation gives
$A'(s)=\E_{q_s}[h]$ and $A''(s)=\Var_{q_s}(h)$.  Moreover
\[
\KL(q_s\|\piref)=sA'(s)-A(s),
\qquad
\frac{\mathrm d}{\mathrm ds}\KL(q_s\|\piref)
=s\Var_{a\sim q_s}(h_a).
\]
Since $v\ne\piref$, $h$ is nonconstant and $\Var_{q_s}(h)>0$ by full
support. Thus the derivative is positive for $s>0$, and
$\KL(q_s\|\piref)$ increases continuously from zero to
$\KL(v\|\piref)>\tau$, giving a unique $s_\star\in(0,1)$.

To identify this boundary point as the projection, use
$\log q_s(a)=(1-s)\log\piref(a)+s\log v(a)-\log Z_s$ from
\eqref{eq:projection-path}. Expanding the KL terms gives, for any
$q\in\simplex$ and $s\in[0,1]$,
\begin{equation}
\KL(q\|q_s)
=s\big[\KL(q\|v)-\KL(q_s\|v)\big]
+(1-s)\big[\KL(q\|\piref)-\KL(q_s\|\piref)\big].
\label{eq:path-identity}
\end{equation}
Set $p=q_{s_\star}$. For feasible $q$, the second bracket is nonpositive
because $\KL(p\|\piref)=\tau$. Since $s_\star<1$,
\[
\KL(q\|v)-\KL(p\|v)
\ge\frac1{s_\star}\KL(q\|p)
\ge\KL(q\|p).
\]
For $q\ne p$, $\KL(q\|p)>0$, so the first inequality proves uniqueness;
the second gives \eqref{eq:pythagoras}.
\end{proof}

\section{Concentration for groupwise scores}
\label{app:concentration}

The population conversion in \eqref{eq:population-conversion} requires
bounds on $V_{T,k}$ and on $q^\top S_k$ for every fixed feasible lottery
$q$. We work at the fixed horizon $T$, with
$\Et[\cdot]=\E[\cdot\mid\F_{t-1}]$ as in
Section~\ref{sec:analysis}.

\begin{lemma}[A bounded nonnegative sum]
\label{lem:nonnegative}
Let $b>0$ and let $Z_t\in[0,b]$ depend only on the history through round
$t$. Suppose
$\sum_{t=1}^T\Et[Z_t]\le u$ almost surely for a deterministic $u\ge0$.
Then, for every $\gamma\in(0,1)$,
\[
\Pr\left(\sum_{t=1}^T Z_t>2u+b\log\frac1\gamma\right)\le\gamma.
\]
\end{lemma}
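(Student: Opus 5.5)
We will prove Lemma~\ref{lem:nonnegative} with a standard exponential supermartingale (Freedman-type) argument, specialized to nonnegative bounded increments. First we rescale so that $b=1$: put $Y_t:=Z_t/b\in[0,1]$ and $u':=u/b$. The claim then reads $\Pr(\sum_t Y_t>2u'+\log(1/\gamma))\le\gamma$, so it suffices to treat $b=1$.

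The key pointwise inequality is convexity of $y\mapsto e^{\lambda y}$ on $[0,1]$. For $\lambda>0$ it gives $e^{\lambda y}\le 1+(e^\lambda-1)y$. Taking conditional expectations yields
\[
\Et\bigl[e^{\lambda Y_t}\bigr]\le 1+(e^\lambda-1)\Et[Y_t]\le\exp\bigl((e^\lambda-1)\Et[Y_t]\bigr).
\]
Hence the process
\[
N_s:=\exp\Bigl(\lambda\sum_{t\le s}Y_t-(e^\lambda-1)\sum_{t\le s}\Et[Y_t]\Bigr)
\]
is a nonnegative supermartingale with $N_0=1$, and therefore $\E[N_T]\le1$. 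This uses only that $Y_t$ is measurable with respect to the history through round $t$, so that $\Et[Y_t]$ is $\F_{t-1}$-measurable.

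Next we use the almost-sure compensator bound $\sum_t\Et[Y_t]\le u'$. It gives
\[
\E\Bigl[\exp\Bigl(\lambda\sum_tY_t\Bigr)\Bigr]\le\exp\bigl((e^\lambda-1)u'\bigr).
\]
Markov's inequality then gives
\[
\Pr\Bigl(\sum_tY_t>x\Bigr)\le\exp\bigl((e^\lambda-1)u'-\lambda x\bigr).
\]
We take $\lambda=\log 2$, so that $e^\lambda-1=1$. With $x=2u'+\log(1/\gamma)/\log2$, the exponent becomes $u'-2u'\log2-\log(1/\gamma)\le-\log(1/\gamma)$, because $2\log2>1$.

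The threshold produced this way carries the factor $1/\log2\approx1.44$ in front of $\log(1/\gamma)$, which is larger than the stated $\log(1/\gamma)$. This constant is the main obstacle. We remove it by choosing $\lambda=1$ instead. Then $e-1\approx1.718<2$, and with $x=2u'+\log(1/\gamma)$ the exponent is
\[
(e-1)u'-2u'-\log(1/\gamma)\le-\log(1/\gamma),
\]
since $e-1\le2$. This gives exactly the stated bound. Undoing the scaling by $b$ multiplies the threshold by $b$, which yields $2u+b\log(1/\gamma)$. Both the Markov step and the determinism of $u$ are needed, since $u$ must be deterministic to pull the bound out of the expectation.
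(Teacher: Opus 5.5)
Your proof is correct and is essentially the paper's argument: it applies the chord bound $e^{z}\le 1+(e-1)z$ on $[0,1]$ to $Z_t/b$, forms the exponential supermartingale with compensator $(e-1)\sum_t\Et[Z_t]/b$, uses the deterministic almost-sure bound $u$, and finishes with Markov's inequality via $e-1<2$. The detour through $\lambda=\log 2$ is unnecessary; your final choice $\lambda=1$ after rescaling is exactly the paper's.
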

\begin{proof}
Put $\mu_t=\Et[Z_t]$. The chord bound
$e^z\le1+(e-1)z$ for $z\in[0,1]$ gives
$\Et[e^{Z_t/b}]\le1+(e-1)\mu_t/b\le e^{(e-1)\mu_t/b}$.
For each running round $s=0,\ldots,T$, define
\[
\mathcal L_s:=\exp\!\left(\frac1b\sum_{t=1}^s[Z_t-(e-1)\mu_t]\right),
\qquad \mathcal L_0=1.
\]
The process is positive and adapted to $(\F_s)$. Since $\mu_s$ is
$\F_{s-1}$-measurable, the preceding bound implies
\[
\E[\mathcal L_s\mid\F_{s-1}]
=\mathcal L_{s-1}e^{-(e-1)\mu_s/b}\E[e^{Z_s/b}\mid\F_{s-1}]
\le \mathcal L_{s-1}.
\]
Thus $(\mathcal L_s)_{s=0}^T$ is a nonnegative supermartingale with respect to
$(\F_s)$, and $\E[\mathcal L_T]\le1$. Since $e-1<2$ and
$\sum_t\mu_t\le u$,
$e^{(\sum_t Z_t-2u)/b}\le \mathcal L_T$, so
$\E[e^{(\sum_t Z_t-2u)/b}]\le1$.
Markov's inequality gives the claim.
\end{proof}

The next lemma combines the Bernstein exponential argument for bounded
martingales \citep{freedman1975} with the finite KL change-of-measure
inequality \citep[Lemma~6]{seldin2012}. We give the fixed-horizon form
and constants used here.

\begin{lemma}[One martingale event for every feasible lottery]
\label{lem:kl-uniform}
Let $Z_t\in\R^m$ be martingale differences with
$|Z_{t,a}|\le b$ and
$\sum_{t=1}^T\Et[Z_{t,a}^2]\le v$ almost surely for every coordinate
$a$, where $b>0$ and $v\ge0$ are deterministic. Let $\piref$ be the
full-support reference, fixed before the sequence is observed. For $\tau\ge0$ and
$\gamma\in(0,1)$, with probability at least $1-\gamma$,
\begin{equation}
\sup_{q\in\B_\tau}q^\top\sum_{t=1}^T Z_t
\le 2\sqrt{v\left(\tau+\log\frac1\gamma\right)}
+\frac43b\left(\tau+\log\frac1\gamma\right).
\label{eq:kl-uniform}
\end{equation}
\end{lemma}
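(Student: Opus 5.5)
The plan is to combine a scalar Freedman-type exponential supermartingale with the Donsker--Varadhan (KL change-of-measure) inequality relative to $\piref$. Then a single event controls every $q\in\B_\tau$ at once. For a fixed $\lambda\in(0,3/b)$ and each coordinate $a$, set
\[
\mathcal M_s(a):=\exp\Bigl(\lambda\sum_{t\le s}Z_{t,a}-\psi(\lambda)\sum_{t\le s}\Et[Z_{t,a}^2]\Bigr),
\qquad \psi(\lambda):=\frac{\lambda^2/2}{1-\lambda b/3}.
\]
The Bernstein moment bound $\Et[e^{\lambda Z_{t,a}}]\le\exp(\psi(\lambda)\Et[Z_{t,a}^2])$ holds for mean-zero variables bounded by $b$. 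With it, $\mathcal M_s(a)$ is a nonnegative supermartingale, so $\E[\mathcal M_T(a)]\le1$. Since $\sum_t\Et[Z_{t,a}^2]\le v$ almost surely, $\E[\exp(\lambda\sum_tZ_{t,a})]\le e^{\psi(\lambda)v}$. Averaging over $a\sim\piref$, which is fixed in advance, gives $\E[\sum_a\piref(a)e^{\lambda S_a}]\le e^{\psi(\lambda)v}$, where $S:=\sum_tZ_t$. Markov's inequality then gives, with probability at least $1-\gamma$,
\[
\log\sum_a\piref(a)e^{\lambda S_a}\le\psi(\lambda)v+\log(1/\gamma).
\]

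On this event, apply the Donsker--Varadhan inequality $q^\top(\lambda S)\le\KL(q\|\piref)+\log\sum_a\piref(a)e^{\lambda S_a}$ for every lottery $q$. For $q\in\B_\tau$ this yields, writing $D:=\tau+\log(1/\gamma)$,
\[
q^\top S\le\frac{D}{\lambda}+\frac{\psi(\lambda)v}{\lambda},
\]
uniformly over the ball. The event does not depend on $q$, so the supremum is covered.

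The step I expect to be the main obstacle is choosing $\lambda$, because $\lambda$ must be deterministic: it can depend on $v$, $b$, $\tau$ and $\gamma$, but not on the data. Since $v$ is deterministic, no union bound over a grid of $\lambda$ values is needed.

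I would first try $\lambda=\sqrt{2D/v}/(1+\tfrac b3\sqrt{2D/v})$, the standard Bernstein optimization. This should give $\sqrt{2vD}+\tfrac{b}{3}D$, which is well inside the stated $2\sqrt{vD}+\tfrac43bD$.

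If that calculation gets messy, I would use a simpler choice: $\lambda=\min\{\sqrt{D/v},\,3/(2b)\}$, which ensures $1-\lambda b/3\ge1/2$ and hence $\psi(\lambda)\le\lambda^2$. The bound becomes $D/\lambda+\lambda v$, and I split into two cases.
\begin{itemize}
\item If $\sqrt{D/v}\le3/(2b)$, then $\lambda=\sqrt{D/v}$ and the bound is $2\sqrt{vD}$.
\item Otherwise $\lambda=3/(2b)$ and $v<4b^2D/9$. The bound is then $\tfrac{2b}{3}D+\tfrac{3v}{2b}\le\tfrac{2b}{3}D+\tfrac{2b}{3}D=\tfrac43bD$.
\end{itemize}
In both cases the bound is at most the right side of \eqref{eq:kl-uniform}. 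The degenerate case $v=0$ falls under the second case and is handled by taking $\lambda=3/(2b)$ directly.
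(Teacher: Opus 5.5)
Your proposal is correct and follows the paper's proof essentially step for step. Both use a per-coordinate Bernstein/Freedman exponential supermartingale, then a single Markov event for the $\piref$-mixture $\sum_a\piref(a)e^{\lambda S_a}$, then the KL change-of-measure inequality to cover all of $\B_\tau$ at once. Your fallback choice $\lambda=\min\{\sqrt{D/v},3/(2b)\}$, with its two-case computation, is exactly the paper's $\zeta$.

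Your primary tuning $\lambda=x/(1+\tfrac b3x)$ with $x=\sqrt{2D/v}$ is also valid, since $\lambda/(1-\lambda b/3)=x$. It gives the sharper bound $\sqrt{2vD}+\tfrac b3D$, whereas the paper only needs the cruder constants.
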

\begin{proof}
If $v=0$, all increments vanish almost surely. Otherwise put
$S=\sum_tZ_t$, $h=\tau+\log(1/\gamma)$, and choose the deterministic
$\zeta=\min\{3/(2b),\sqrt{h/v}\}$. For
$\sigma_{t,a}^2=\Et[Z_{t,a}^2]$, the bounded-martingale
Bernstein moment bound gives
\[
\Et[e^{\zeta Z_{t,a}}]
\le\exp\!\left(\frac{\zeta^2\sigma_{t,a}^2}
{2(1-b\zeta/3)}\right)
\le e^{\zeta^2\sigma_{t,a}^2}.
\]
The first inequality follows by expanding the exponential, using
$\Et[Z_{t,a}]=0$, $|Z_{t,a}|^j\le b^{j-2}Z_{t,a}^2$, and
$j!\ge2\cdot3^{j-2}$ for $j\ge2$.
The second inequality uses $b\zeta\le3/2$.
For each $a$,
$\mathcal L_{s,a}=\exp\bigl(\zeta\sum_{t=1}^sZ_{t,a}
-\zeta^2\sum_{t=1}^s\sigma_{t,a}^2\bigr)$ is therefore a nonnegative
supermartingale with $\mathcal L_{0,a}=1$. Since
$\sum_t\sigma_{t,a}^2\le v$,
$\E[e^{\zeta S_a-\zeta^2v}]\le\E[\mathcal L_{T,a}]\le1$ for each $a$. Hence the fixed
mixture $\mathcal Z_\zeta=\sum_a\piref(a)e^{\zeta S_a}$ obeys
$\E[e^{-\zeta^2v}\mathcal Z_\zeta]\le1$. By Markov's inequality, with probability at least
$1-\gamma$,
\[
\log \mathcal Z_\zeta\le\zeta^2v+\log(1/\gamma).
\]
This is one event for the entire mixture; no union bound over the $m$
coordinates is needed.
On this event, define $r_\zeta(a)=\piref(a)e^{\zeta S_a}/\mathcal Z_\zeta$. For every
$q\in\simplex$,
\[
0\le\KL(q\|r_\zeta)
=\KL(q\|\piref)-\zeta q^\top S+\log \mathcal Z_\zeta.
\]
Thus, simultaneously for every $q\in\B_\tau$,
\[
q^\top S\le\frac{\KL(q\|\piref)+\log \mathcal Z_\zeta}{\zeta}
\le\frac{h}{\zeta}+\zeta v.
\]
If $\zeta=\sqrt{h/v}$, the last expression is $2\sqrt{vh}$.
Otherwise $\zeta=3/(2b)$, so $v\le4b^2h/9$ and the bound is at most
$2bh/3+3v/(2b)\le4bh/3$. These two cases imply
\eqref{eq:kl-uniform}.
\end{proof}

\subsection{The simultaneous concentration event}
\label{app:concentration-proof}

Recall the realized second moment $V_{T,k}$ from
\eqref{eq:second-moments} and the sampling-error martingale
$S_k$ from \eqref{eq:martingale}. Define the deterministic analysis bound
\begin{equation}
V_k^{\mathrm{ub}}:=\frac{2Tp_k}{R}+\log\frac{2K}{\delta}.
\label{eq:second-moment-upper}
\end{equation}
The algorithm does not use this bound, the group masses, or the confidence
level. Only the proof uses them to control the realized second moments and
sampling error.

\begin{lemma}[One event for all groups and comparators]
\label{lem:concentration}
With probability at least $1-\delta$, simultaneously for every group
$k\in[K]$,
\begin{align}
V_{T,k}&\le V_k^{\mathrm{ub}},\notag\\
\sup_{q\in\B_\tau}q^\top S_k
&\le2\sqrt{\frac{Tp_k}{R}\left(\tau+\log\frac{2K}{\delta}\right)}
+\frac83\left(\tau+\log\frac{2K}{\delta}\right).
\label{eq:concentration}
\end{align}
\end{lemma}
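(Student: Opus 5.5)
The approach is to allocate failure probability $\delta/(2K)$ to each of $2K$ events, two per group, and take a union bound. For each $k$ we apply Lemma~\ref{lem:nonnegative} to the second moments and Lemma~\ref{lem:kl-uniform} to the martingale $S_k$, with failure probability $\gamma=\delta/(2K)$ in both.

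\emph{Second-moment event.} Set $Z_t:=G_k(X_t)\sum_a\pi_{t,k}(a)\ghat_{t,a}^2$, so that $\sum_tZ_t=V_{T,k}$. Since $|\ghat_{t,a}|\le1$ by \eqref{eq:score-interface}, we have $Z_t\in[0,1]$, so $b=1$. Next compute the conditional mean. The specialist lottery $\pi_{t,k}$ is $\F_{t-1}$-measurable, and the coordinate bound \eqref{eq:score-second-moment} holds conditionally on $(\F_{t-1},X_t)$. Hence
\[
\Et[Z_t]=\Et\Bigl[G_k(X_t)\textstyle\sum_a\pi_{t,k}(a)\,\E[\ghat_{t,a}^2\mid\F_{t-1},X_t]\Bigr]\le\Et[G_k(X_t)]/R=p_k/R.
\]
The last equality uses that $X_t$ is independent of $\F_{t-1}$. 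Summing over rounds gives the deterministic bound $u=Tp_k/R$. Lemma~\ref{lem:nonnegative} then yields $V_{T,k}\le 2Tp_k/R+\log(2K/\delta)=V_k^{\mathrm{ub}}$ with probability at least $1-\delta/(2K)$.

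\emph{Martingale event.} Set $Z_t:=\Et[G_k(X_t)\ghat_t]-G_k(X_t)\ghat_t$. These are martingale differences with respect to $(\F_t)$, and $\sum_t Z_t=S_k$. Because $|\ghat_{t,a}|\le1$, each coordinate satisfies $|Z_{t,a}|\le2$, so $b=2$. The conditional variance is bounded by the raw second moment: $\Et[Z_{t,a}^2]\le\Et[G_k(X_t)\ghat_{t,a}^2]\le p_k/R$, using the same conditioning argument as above. This gives $v=Tp_k/R$. The reference $\piref$ is fixed in advance, so Lemma~\ref{lem:kl-uniform} applies with $h=\tau+\log(2K/\delta)$. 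With probability at least $1-\delta/(2K)$ it gives
\[
\sup_{q\in\B_\tau}q^\top S_k\le2\sqrt{vh}+\tfrac43\cdot2\,h,
\]
which is exactly \eqref{eq:concentration}.

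The union bound over the $2K$ events gives total failure probability at most $\delta$. The main point to get right is measurability. The bounds $u$ and $v$ must hold almost surely and be deterministic, even though $\pihat_t$, and hence the pair probabilities and the score distribution, depend on the history. This works because \eqref{eq:score-second-moment} holds uniformly for every full-support current policy, and Appendix~\ref{app:feasibility-invariant} ensures the current policy always has full support. The bound is also uniform over the current context, so averaging over the fresh $X_t$ produces the factor $p_k$. A second detail is the coordinate bound $b=2$, which comes from the difference of two quantities each in $[-1,1]$. I would check that it matches the constant $8/3$ in \eqref{eq:concentration}, and it does.
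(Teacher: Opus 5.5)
Your proposal is correct and follows the paper's proof essentially step for step. It uses the same two events per group: Lemma~\ref{lem:nonnegative} with $b=1$, $u=Tp_k/R$ for $V_{T,k}$, and Lemma~\ref{lem:kl-uniform} with $b=2$, $v=Tp_k/R$ for $S_k$, each at failure probability $\delta/(2K)$, followed by a union bound over the $2K$ events, including your explicit check that the deterministic bounds hold uniformly over histories and contexts.
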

\begin{proof}
Fix $k$. Set
$v_{t,k}=G_k(X_t)\sum_a\pi_{t,k}(a)\ghat_{t,a}^2$, so that
$V_{T,k}=\sum_t v_{t,k}$.
The score bound $|\ghat_{t,a}|\le1$ and the fact that $\pi_{t,k}$ is a
lottery give $v_{t,k}\in[0,1]$.
The lottery $\pi_{t,k}$ is fixed by $\F_{t-1}$. Conditioning first on
$(\F_{t-1},X_t)$ and using \eqref{eq:score-interface} gives
\begin{align*}
\Et[v_{t,k}]&=\Et\!\left[G_k(X_t)\sum_a\pi_{t,k}(a)
 \E[\ghat_{t,a}^2\mid\F_{t-1},X_t]\right]\\
&\le\frac1R\Et\!\left[G_k(X_t)\sum_a\pi_{t,k}(a)\right]\\
&=\frac1R\Et[G_k(X_t)]=\frac{p_k}{R}.
\end{align*}
The last equality uses independence of the fresh context from the
history.
Lemma~\ref{lem:nonnegative} with $b=1$, $u=Tp_k/R$, and
$\gamma=\delta/(2K)$ therefore gives
$V_{T,k}\le2Tp_k/R+\log(2K/\delta)=V_k^{\mathrm{ub}}$
except with probability $\delta/(2K)$.

For the sampling error, set $Y_{t,k}=G_k(X_t)\ghat_t$.
The martingale differences $Z_t=\Et[Y_{t,k}]-Y_{t,k}$ sum to
$S_k$ and have $|Z_{t,a}|\le2$, since both $Y_{t,k,a}$ and its conditional mean lie
in $[-1,1]$. For each coordinate, the same conditioning argument gives
\begin{align*}
\Et[Z_{t,a}^2]&=\Var(Y_{t,k,a}\mid\F_{t-1})\\
&\le\Et[Y_{t,k,a}^2]\\
&=\Et\!\left[G_k(X_t)
 \E[\ghat_{t,a}^2\mid\F_{t-1},X_t]\right]\\
&\le\frac1R\Et[G_k(X_t)]=\frac{p_k}{R}.
\end{align*}
Thus $\sum_t\Et[Z_{t,a}^2]\le Tp_k/R$ for every coordinate $a$.
Lemma~\ref{lem:kl-uniform} with $b=2$, $v=Tp_k/R$, and
$\gamma=\delta/(2K)$ gives the
second bound except with probability $\delta/(2K)$.
A union bound over these two events for each of the $K$ groups proves
the claim.
\end{proof}

\section{Proof details for the comparison guarantee}
\label{app:proof-details}

The outer proof tracks a scaled log weight. The inner proof factors the
update into three steps and tracks a centered, scaled KL divergence.
Both quantities telescope despite changing rates. Throughout, $\tau>0$.

\subsection{Positivity and pointwise feasibility}
\label{app:feasibility-invariant}

Initially, the outer weights are positive and each specialist lottery is the
full-support feasible reference. Suppose these properties hold at round $t$.
Because the groups cover every context, deployment is a well-defined,
full-support convex combination of feasible lotteries. The queried-pair
probabilities are positive, and the score has coordinates in $[-1,1]$.
The tentative outer factor $1+\alpha_{t,k}r_{t,k}$ lies in $[1/2,3/2]$.
The subsequent geometric mixture with the positive initial weight preserves
positivity. Both inner rates are positive, and the geometric mixture of the
current lottery and reference, followed by exponentiation, has full support.
The KL projection in Proposition~\ref{prop:projection} preserves full support
and restores feasibility. Inactive states are unchanged, completing the
induction.

\subsection{The outer log-weight bound}
\label{app:outer-bound}

The tentative weights in \eqref{eq:mass-cancellation} conserve their total
mass. The rate-change correction can increase that mass, but its total
increase is logarithmic in the horizon.

Write $W_t=\sum_k w_{t,k}$ and
$\gamma_{t,k}=\alpha_{t+1,k}/\alpha_{t,k}\in(0,1]$.
For an active group, let
$\widetilde w_{t+1,k}=w_{t,k}(1+\alpha_{t,k}r_{t,k})$.
The outer update is
$w_{t+1,k}=K^{-1}(K\widetilde w_{t+1,k})^{\gamma_{t,k}}$.
Since $x^\gamma\le x+1-\gamma$ for $x\ge0$ and $0<\gamma\le1$,
\[
W_{t+1}\le W_t+\frac1K\sum_{k:G_k(X_t)=1}(1-\gamma_{t,k}).
\]
The clipped square-root rate in \eqref{eq:rates} satisfies
\[
\gamma_{t,k}\ge
\sqrt{\frac{1+Q_{t,k}}{1+Q_{t+1,k}}},
\qquad
1-\gamma_{t,k}\le
\frac12\log\frac{1+Q_{t+1,k}}{1+Q_{t,k}}.
\]
For the second inequality, use $1-e^{-z}\le z$ for $z\ge0$.
Summing over rounds, using $W_1=1$ and $Q_{1,k}=0$, gives
\[
W_{T+1}\le1+\frac1{2K}\sum_k\log(1+Q_{T+1,k})
\le1+\frac12\log(T+1),
\]
because $|r_{t,k}|\le1$ implies $Q_{T+1,k}\le T$.
Consequently,
$\log(Kw_{T+1,k})\le\Lambda_T$.

To bound group $k$'s accumulated score, divide its log-weight update
by the next learning rate:
\[
\frac{\log(Kw_{t+1,k})}{\alpha_{t+1,k}}
=
\frac{\log(Kw_{t,k})}{\alpha_{t,k}}
+\frac{\log(1+\alpha_{t,k}r_{t,k})}{\alpha_{t,k}}.
\]
The equality holds on active rounds; on inactive rounds both states and
rates are unchanged. Since $|\alpha_{t,k}r_{t,k}|\le1/2$,
$\log(1+z)\ge z-z^2$ yields
\[
\sum_{t:G_k(X_t)=1}r_{t,k}
\le\frac{\log(Kw_{T+1,k})}{\alpha_{T+1,k}}
+\sum_{t:G_k(X_t)=1}\alpha_{t,k}r_{t,k}^2
\le\frac{\Lambda_T}{\alpha_{T+1,k}}
+\sum_{t:G_k(X_t)=1}\alpha_{t,k}r_{t,k}^2.
\]
The initial scaled log weight is zero because $w_{1,k}=1/K$.
This proves \eqref{eq:outer-regret}.

\subsection{The inner one-step inequality}
\label{app:inner-step}

Fix a group $k$ and $q\in\B_\tau$. Let
$D_t=\KL(q\|\pi_{t,k})$ and $d=\KL(q\|\piref)\le\tau$.
On an active round abbreviate
$\eta=\eta_{t,k}$, $\eta'=\eta_{t+1,k}$, and
$\gamma=\eta'/\eta\in(0,1]$.
Factor \eqref{eq:inner-update} into multiplicative weights, geometric
interpolation toward $\piref$, and KL projection:
\[
v_t(a)\propto\pi_{t,k}(a)e^{\eta\ghat_{t,a}},
\qquad
z_{t,k}(a)=\frac{v_t(a)^\gamma\piref(a)^{1-\gamma}}{Z_t},
\qquad
Z_t:=\sum_a v_t(a)^\gamma\piref(a)^{1-\gamma}.
\]
Here $v_t$ is normalized, $\gamma\eta=\eta'$, and the projection of
$z_{t,k}$ is $\pi_{t+1,k}$.

Since $\eta\le1$ and $|\ghat_{t,a}-r_{t,k}|\le1$ by \eqref{eq:score-span},
centering the score and using $e^z\le1+z+z^2$ for $|z|\le1$
and $\log u\le u-1$ give
\begin{align*}
\KL(q\|v_t)
&=D_t-\eta(q-\pi_{t,k})^\top\ghat_t+
\log\sum_a\pi_{t,k}(a)e^{\eta(\ghat_{t,a}-r_{t,k})}
\\
&\le D_t-\eta(q-\pi_{t,k})^\top\ghat_t+\eta^2c_{t,k}.
\end{align*}
The weighted arithmetic--geometric mean inequality gives
$Z_t\le\sum_a[\gamma v_t(a)+(1-\gamma)\piref(a)]=1$.
Together with the projection inequality in Proposition~\ref{prop:projection},
this yields
\begin{align*}
D_{t+1}
&\le\KL(q\|z_{t,k})
=\gamma\KL(q\|v_t)+(1-\gamma)d+\log Z_t\\
&\le\gamma D_t+(1-\gamma)d
-\eta'(q-\pi_{t,k})^\top\ghat_t+\eta'\eta c_{t,k}.
\end{align*}
Thus the centered potential $\Psi_{t,k}(q):=(D_t-d)/\eta_{t,k}$ satisfies
\begin{equation}
(q-\pi_{t,k})^\top\ghat_t
\le\Psi_{t,k}(q)-\Psi_{t+1,k}(q)+\eta_{t,k}c_{t,k}.
\label{eq:inner-step}
\end{equation}
Inactive rounds leave $\Psi_{t,k}$ unchanged. Since $\pi_{1,k}=\piref$,
$\Psi_{1,k}=0$, while $\Psi_{T+1,k}\ge-d/\eta_{T+1,k}$.
Summing gives \eqref{eq:inner-regret}:
\[
\sum_{t:G_k(X_t)=1}(q-\pi_{t,k})^\top\ghat_t
\le-\Psi_{T+1,k}(q)
+\sum_{t:G_k(X_t)=1}\eta_{t,k}c_{t,k}
\le\frac{\tau}{\eta_{T+1,k}}
+\sum_{t:G_k(X_t)=1}\eta_{t,k}c_{t,k}.
\]

\subsection{Summing the adaptive-rate terms}
\label{app:rate-calculation}

We first bound a sum that occurs in both updates. Let $b>0$,
$\Delta_t\in[0,b]$, and $S_t=\sum_{s<t}\Delta_s$. Then
\begin{equation}
\sum_{t=1}^T\frac{\Delta_t}{\sqrt{b+S_t}}
\le(1+\sqrt2)\bigl(\sqrt{b+S_{T+1}}-\sqrt b\bigr).
\label{eq:rate-bound}
\end{equation}
Indeed, each summand equals
\[
\bigl(\sqrt{b+S_{t+1}}-\sqrt{b+S_t}\bigr)
\left(1+\sqrt{\frac{b+S_{t+1}}{b+S_t}}\right).
\]
The second factor is at most $1+\sqrt2$ because
$\Delta_t\le b\le b+S_t$; summing the first factors telescopes.

For the remainder of this subsection, write
$Q=Q_{T+1,k}$ and $C=C_{T+1,k}$.
Apply \eqref{eq:rate-bound} with $b=1$ and active increments $r_{t,k}^2$.
The outer rate is at most
$\sqrt{\Lambda_T/(2(1+Q_{t,k}))}$, while
\[
\frac{\Lambda_T}{\alpha_{T+1,k}}
=\max\{2\Lambda_T,\sqrt{2\Lambda_T(1+Q)}\}.
\]
It follows that the outer regret is at most
\[
2\Lambda_T+
\left(\sqrt2+\frac{1+\sqrt2}{\sqrt2}\right)
\sqrt{\Lambda_T(1+Q)}
\le2\Lambda_T+4\sqrt{\Lambda_T(1+Q)}.
\]
For the inner update, a random variable with range of length at most one
has variance at most $1/4$: its variance is bounded by its expected squared
distance from the midpoint of that range. Hence $c_{t,k}\le1/4$.
Apply \eqref{eq:rate-bound} with $b=1/4$ and active increments $c_{t,k}$,
and use
\[
\frac{\tau}{\eta_{T+1,k}}
=\max\{\tau,\sqrt{2\tau(1/4+C)}\}.
\]
The same calculation bounds the inner regret by
$\tau+4\sqrt{\tau(1/4+C)}$.

The identity $r_{t,k}^2+c_{t,k}=\sum_a\pi_{t,k}(a)\ghat_{t,a}^2$
gives $Q+C=V_{T,k}$. Combining the two bounds with Cauchy--Schwarz and
the decomposition \eqref{eq:decomposition} proves
\[
A_{T,k}(q)
\le4\sqrt{(\tau+\Lambda_T)(V_{T,k}+5/4)}
+2(\tau+\Lambda_T),
\]
as claimed in Lemma~\ref{lem:regret}.

\subsection{Detailed certificate calculation}
\label{app:certificate-proof}

On the event of Lemma~\ref{lem:concentration}, substitute the preceding
deterministic bound and both concentration bounds into
\eqref{eq:population-conversion}. For every $k$ and $q\in\B_\tau$,
\begin{align}
Tp_k\Ek[q^\top M(X)\piGLHF(X)]
&\le R\left[4\sqrt{(\tau+\Lambda_T)(V_k^{\mathrm{ub}}+5/4)}
+2(\tau+\Lambda_T)\right]\notag\\
&\quad+2\sqrt{RTp_k\left(\tau+\log\frac{2K}{\delta}\right)}
+\frac83R\left(\tau+\log\frac{2K}{\delta}\right).
\label{eq:modular-global}
\end{align}
To simplify the constants, write $A=\tau+\Lambda_T$,
$s=\log(2K/\delta)$, and $u=Tp_k/R$. Thus $H=A+s$ and
$V_k^{\mathrm{ub}}=2u+s$.
Divide the right side of \eqref{eq:modular-global} by $R$.
The inequality $\sqrt{x+y}\le\sqrt x+\sqrt y$ bounds it by
\[
4\sqrt{2Au}+2\sqrt{(\tau+s)u}
+4\sqrt{A(s+5/4)}+2A+\frac83(\tau+s).
\]
The first two terms are at most $(4\sqrt2+2)\sqrt{Hu}<8\sqrt{Hu}$.
For the others, AM--GM gives
$4\sqrt{A(s+5/4)}\le2A+2s+5/2$, and
\[
4A+2s+\frac52+\frac83(\tau+s)
\le\frac{20}{3}H<8H.
\]
Indeed, the difference in the first inequality is
$(8/3)\Lambda_T+2s-5/2>0$, since $\Lambda_T\ge1$ and $s>0$.
Restoring the factor $R$ and dividing by $Tp_k$ therefore gives
\[
\Ek[q^\top M(X)\piGLHF(X)]
\le8\sqrt{\frac{RH}{Tp_k}}+8\frac{RH}{Tp_k}
=\varepsilon_{T,k}.
\]
The event is uniform in $q$, so taking the supremum proves
\eqref{eq:main-certificate}.

The same argument includes $K=1$. In that case coverage makes the sole
group universal, score cancellation gives $r_{t,1}=0$, and its outer weight
remains one. At $\tau=0$, the algorithm returns $\piref$ and the feasible
set contains only that lottery, so antisymmetry gives an exact zero
comparison certificate.

\section{From group distortion to population distortion}
\label{app:welfare}
\label{app:population-transfer}

Write $W(\pi):=\E[\pi(X)^\top U]$ for population
welfare and $W^\star:=\max_{q\in\B_\tau}W(q)$ for its best fixed feasible
lottery benchmark. When $W(\pi)>0$, define population distortion by
$\Dist_{\mathrm{pop}}(\pi):=W^\star/W(\pi)$.

Group distortion bounds need not hold at the same factor for the population
when groups overlap, even if they cover every context, none is universal, and
every group has zero residual loss. For example,
take three equally likely contexts with identical utilities $(1,0)$ for
alternatives $A,B$, a uniform reference, and $\tau=\log2$, so both pure
actions are feasible. Let group $1$ contain contexts $\{1,2\}$ and group $2$
contain contexts $\{2,3\}$. A policy
serving $A$ only at context $2$ and $B$ elsewhere has
\begin{align*}
W_1(\pi)=W_2(\pi)&=\tfrac12,
& W(\pi)&=\tfrac13,\\
\Dist_1(\pi)=\Dist_2(\pi)&=2,
& \Dist_{\mathrm{pop}}(\pi)&=3.
\end{align*}
Every benchmark welfare is one and both residual losses are zero: the
well-served intersection contributes to both group averages. Thus a group
distortion bound of two does not imply the same bound for the population.

Small groupwise shortfalls from individual-best welfare do transfer,
even with overlap. Let $W_{\mathrm{ind}}:=\E[\max_aU(a)]$ and let
$I\subseteq[K]$ be any subcollection whose groups cover the population,
meaning $\Pr(\sum_{k\in I}G_k(X)\ge1)=1$. Under
Assumption~\ref{ass:bt}, if $\Expl_k(\pi)\le\varepsilon_k$ for $k\in I$,
then Theorem~\ref{prop:homogeneous} gives
\begin{equation}
W_{\mathrm{ind}}-W(\pi)
\le\sum_{k\in I}p_k\left[
\frac{\varepsilon_k}{2\beta\ell_\beta}
+\frac{L}{\ell_\beta}\mathrm{Res}_k\right].
\label{eq:population-shortfall}
\end{equation}
In particular, GLHF satisfies this bound with
$\varepsilon_k=\varepsilon_{T,k}$ on the success event of
Theorem~\ref{thm:certificate}. One may choose any covering subcollection
to keep the sum small.

To prove the bound, define the nonnegative individual shortfall
$Z:=\max_aU(a)-\pi(X)^\top U$. Coverage and
Theorem~\ref{prop:homogeneous} imply
\begin{align*}
0\le W_{\mathrm{ind}}-W(\pi)
&=\E[Z]\\
&\le\sum_{k\in I}p_k\Ek[Z]\\
&=\sum_{k\in I}p_k\bigl(\mathrm{Res}_k+W_k^\star-W_k(\pi)\bigr)\\
&\le\sum_{k\in I}p_k\left[
\frac{\varepsilon_k}{2\beta\ell_\beta}
+\frac{L}{\ell_\beta}\mathrm{Res}_k\right].
\end{align*}
Since $W_{\mathrm{ind}}\ge W^\star$, the same right-hand side also bounds
the population loss $W^\star-W(\pi)$. When both errors and residuals
vanish on a covering subcollection, \eqref{eq:population-shortfall} gives
$W(\pi)=W_{\mathrm{ind}}\ge W^\star$, and population distortion is at
most one whenever $W^\star>0$.
More generally, this exact conclusion requires only zero residuals and
$W_k(\pi)\ge W_k^\star$ on a covering subcollection: the penultimate line
of the same shortfall argument is then zero. Thus group distortion at most
one transfers when the groups have zero residual loss, even if they overlap.

For disjoint groups that cover the population, any common distortion bound
transfers without the zero-residual condition. If, for some $D>0$,
$W_k(\pi)\ge W_k^\star/D$ for every group and
$q^\star\in\argmax_{q\in\B_\tau}W(q)$, then
\[
W(\pi)=\sum_kp_kW_k(\pi)
\ge\frac1D\sum_kp_kW_k^\star
\ge\frac1D\sum_kp_kW_k(q^\star)
=\frac{W^\star}{D}.
\]

\section{Experimental details and additional results}
\label{app:empirical}

Our primary comparison is multigroup GLHF against NLHF, which serves everyone the same lottery. Both studies use ratings to generate training comparisons and measure held-out welfare. After describing the data and evaluation procedure, we report detailed method comparisons and sensitivity checks. Sections~\ref{sec:empirical-noise} and~\ref{sec:empirical-trajectories} give the comparison-noise and active/passive results. Implementation and collection details appear last.

\subsection{Data and group selection}
\label{app:empirical-data}

\paragraph{Coffee ratings.}
The Great American Coffee Taste Test~\citep{coffee2024} contains ratings of four coffees on a 1--5 scale. We retain the 3,744 respondents who rated all four and set $U=(r-1)/4$, treating equal rating increments as equal utility increments. We reserve a random discovery sample of 375 respondents for choosing groups, leaving 3,369 for analysis.

\paragraph{Synthetic language ratings.}
The completion-rating study uses six fixed answers to \emph{Why is the sky blue?}: four correct answers vary in technical detail and writing style, and two are incorrect controls. We construct sixteen personas using binary characteristics: age group, education, technical background, and preferred writing style. For each persona and candidate completion, 104 independent instances of Claude Haiku 4.5 (\texttt{claude-haiku-4-5-20251001}) were asked to assume the persona and provide a rating of the completion on a 1--10 scale. This gives 9,984 completion ratings, which we rescaled as $U=(r-1)/9$. Within each persona, we independently shuffled the ratings for each answer, then combined one rating for each of the six answers into one pseudo-respondent's utility vector. This produced 104 pseudo-respondents per persona and 1,664 in total. Each collected rating was used once. We reserve 167 pseudo-respondents for group selection (10 or 11 per persona), leaving 1,497 for analysis. Appendix~\ref{app:persona-materials} specifies the synthetic data collection procedure.

\paragraph{Group definitions.}
Each study includes the universal group (everyone in the analysis sample) and five subgroups that may overlap. Tables~\ref{tab:coffee-groups} and~\ref{tab:persona-groups} give definitions and analysis-sample counts. Coffee respondents who did not report a demographic attribute remain in the universal group and are not included in subgroups defined by that attribute.

\begin{table}[htbp]
\centering
\small
\begin{tabular}{lrp{0.53\linewidth}}
\toprule
Group & $n$ & Definition \\
\midrule
Universal & 3,369 & Every analysis respondent. \\
Age 45+ & 483 & Reported age 45--54, 55--64, or over 65. \\
Female & 759 & Reported gender is female. \\
Expertise 1--4 & 815 & Self-rated coffee expertise is 1, 2, 3, or 4. \\
Children 1+ & 767 & Reported one, two, three, or more than three children. \\
Works from home & 1,424 & Selected ``I primarily work from home.'' \\
\bottomrule
\end{tabular}
\caption{Groups and membership counts in the coffee-rating analysis sample.}
\label{tab:coffee-groups}
\end{table}

\begin{table}[htbp]
\centering
\small
\begin{tabular}{lrp{0.53\linewidth}}
\toprule
Group & $n$ & Definition \\
\midrule
Universal & 1,497 & Every analysis pseudo-respondent. \\
Not technical & 750 & Persona specifies a nontechnical background. \\
Expressive wording & 747 & Persona prefers expressive writing. \\
Direct wording & 750 & Persona prefers direct writing. \\
Very technical & 747 & Persona specifies a very technical background. \\
University education & 748 & Persona specifies university education. \\
\bottomrule
\end{tabular}
\caption{Groups and analysis-sample counts in the completion-rating study.}
\label{tab:persona-groups}
\end{table}

\paragraph{Selecting groups for the coffee-rating study.}
We chose the five subgroups in Table~\ref{tab:coffee-groups} by judgment, using preference correlations in the discovery sample to guide selection. For each of the six coffee pairs, we computed the fraction preferring the first coffee minus the fraction preferring the second, with ties contributing zero. We calculated these six margins within each candidate group and within the full discovery sample, then compared the two vectors using Pearson correlation. The selected groups included three with negative correlations (Age 45+, Female, and Expertise 1--4) and two with positive correlations (Children 1+ and Works from home). We fixed these groups before the reported training and evaluation runs. We had previously explored the Coffee data, before reserving the discovery sample and finalizing these groups; this study is therefore exploratory, and the split does not remove that earlier exposure.

\paragraph{Selecting groups for the completion-rating study.}
We considered eight candidate subgroups, corresponding to the two levels of each persona attribute. For each of the 15 answer pairs, we calculated the average rescaled rating of the first answer minus the average rescaled rating of the second. We calculated these differences within each candidate group and within the full discovery sample, then compared them using Pearson correlation. Lower correlations indicate that the group's preferences across answer pairs are less aligned with those of the full discovery sample.

Candidate groups must contain at least 30 discovery rows and have a defined correlation. We select the four lowest-correlation candidates and the lower median of the remaining candidates ordered by correlation. The fifth group adds a more typical candidate to the four least aligned groups. Table~\ref{tab:persona-discovery} reports the selection, run once on the discovery sample. We fixed these groups before training and evaluation.

\begin{table}[htbp]
\centering
\small
\begin{tabular}{lrc@{\qquad}l}
\toprule
Persona attribute & Discovery $n$ & Rating-difference correlation & Selected \\
\midrule
Not technical & 82 & $-0.115$ & yes \\
Expressive wording & 85 & $0.511$ & yes \\
Direct wording & 82 & $0.643$ & yes \\
Very technical & 85 & $0.777$ & yes \\
High school or less & 83 & $0.983$ & no \\
University education & 84 & $0.988$ & yes \\
Age 22--32 & 82 & $0.996$ & no \\
Age 62--72 & 85 & $0.996$ & no \\
\bottomrule
\end{tabular}
\caption{Group selection in the completion-rating study on the 167 reserved discovery rows. Correlations compare each candidate's 15 mean-rating differences with those of the full discovery sample. The universal group was included separately.}
\label{tab:persona-discovery}
\end{table}

\subsection{Experimental design and evaluation}
\label{app:empirical-training}

\paragraph{Methods and main setting.}
Table~\ref{tab:empirical-methods} summarizes the four methods. Multigroup GLHF uses memberships both to update group specialists and to choose the lottery served to a user. Batch NLHF is the primary population-wide comparator. Pooled GLHF is the same implementation as multigroup GLHF with only the universal group. The uniform reference shows how learning compares to serving the starting lottery.

\begin{table}[htbp]
\centering
\small
\begin{tabular}{@{}p{0.22\linewidth}p{0.44\linewidth}p{0.25\linewidth}@{}}
\toprule
Method & Deployed output & Comparisons at main setting \\
\midrule
Multigroup GLHF & Lottery depending on group memberships & Active \\
Batch NLHF & One learned lottery for everyone & Passive \\
Pooled GLHF & One learned lottery for everyone & Active \\
Uniform reference & Equal probability for every alternative & None \\
\bottomrule
\end{tabular}
\caption{Methods at the main setting. Each learner receives one comparison per training user. Passive versions of both GLHF methods are also evaluated. Batch NLHF always uses passive comparisons.}
\label{tab:empirical-methods}
\end{table}

The reference lottery is uniform over the four coffees or six answers. The learner radius $\tau_L$ limits a served lottery's KL divergence from this reference; the evaluation radius $\tau_E$ constrains the fixed-lottery welfare benchmark. Both radii are $0.5$ at the main setting. The sensitivity checks in Appendix~\ref{app:empirical-robustness} vary $\tau_L$ while keeping the benchmark fixed.

\paragraph{Generating training comparisons.}
Each training respondent supplies exactly one comparison per fit. Passive sampling draws an unordered pair uniformly. Active sampling draws the first alternative from the learner's current policy for that respondent and the second uniformly from the remaining alternatives. At the main setting, $\beta=\infty$, the higher-rated alternative wins and a seeded fair coin resolves ties. At finite $\beta$, the probability that $a$ wins against $b$ is $\sigma(\beta[U(a)-U(b)])$, as in Assumption~\ref{ass:bt}. Smaller $\beta$ makes labels less reliably follow rating order. The learner observes the queried pair and winner, together with group memberships, but not the ratings used to generate the label.

Within each training split and learner seed, methods share training respondents and their order. Passive multigroup GLHF, passive pooled GLHF, and NLHF share the same comparison log. Active multigroup and pooled GLHF each generate their own policy-dependent log, so equal comparison budgets do not imply identical queried pairs.

\paragraph{Fitting and deploying policies.}
Multigroup GLHF makes one pass through the training users, updating the specialists for every group containing the current user. It saves the policy before each comparison. To serve a new user, it averages the lotteries those saved policies assign to that user's memberships. Every group evaluates this same mapping. Pooled GLHF averages a single specialist's lotteries. Both use Algorithm~\ref{alg:main}'s adaptive learning rates; Appendix~\ref{app:empirical-rates} gives numerical details.

Batch NLHF estimates each pair's win probability from the complete passive comparison log and fits a constrained maximal lottery to the estimated margin matrix game. It uses no additional comparisons or held-out ratings. Appendix~\ref{app:empirical-nlhf} gives the estimator and numerical checks.

\subsubsection{Held-out welfare}
\label{app:empirical-evaluation}

Discovery rows are excluded from all reported policy fitting and evaluation. We randomly shuffle the remaining rows and divide them into ten nearly equal folds. We train on nine folds and test on the tenth, rotating through all ten test folds. We repeat this procedure with ten different random partitions, giving 100 train/test splits. Each split uses 50 learner seeds to vary the randomized training procedure.

For each held-out group, empirical distortion divides the welfare of its best fixed feasible lottery by the welfare of the deployed policy. For group $k$ in split $f$, let $I_{kf}$ be its test respondents, $n_{kf}=|I_{kf}|$, and $\bar U_{kf}$ their mean utility vector. The benchmark welfare is
\begin{equation}
\widehat W^*_{kf}
=\max_{q:\,\KL(q\|\piref)\leq\tau_E}q^\top\bar U_{kf}.
\label{eq:empirical-welfare-optimum}
\end{equation}
For method $j$ and learner seed $s$, define
\begin{equation}
\widehat{\Dist}^{(j)}_{kfs}
=\frac{\widehat W^*_{kf}}
{n_{kf}^{-1}\sum_{i\in I_{kf}}U_i^\top\pi^{(j)}_{fs}(X_i)},
\qquad
\widehat{\Dist}^{(j)}_{fs}=\max_k\widehat{\Dist}^{(j)}_{kfs}.
\label{eq:empirical-distortion}
\end{equation}

The benchmark is optimized on the test fold using full ratings; it is not another trained method. This optimization can make benchmark welfare optimistic relative to the population optimum on average. Held-out utilities never influence policy training, and the reported ratios describe welfare on these test folds.

\subsubsection{Averaging and uncertainty}
\label{app:empirical-inference}

For worst-group results, we take the maximum across groups within each split and seed, average the 50 seeds within each split, and then average the 100 splits. Thus the reported worst-group value is not the maximum of the displayed per-group means. For method comparisons, we first subtract the two methods' distortions within each split and seed, then follow the same averaging procedure. Positive NLHF-minus-multigroup, pooled-minus-multigroup, and reference-minus-multigroup differences favor multigroup GLHF; positive passive-minus-active differences favor active sampling.

The train/test splits overlap, so neither the 100 split scores nor the 5,000 seed--split values are independent replicates. We use the Nadeau--Bengio correction~\citep{nadeau2003} to form approximate 95\% $t$ intervals from the split scores, with 99 degrees of freedom and standard error
\[
\widehat{\rm SE}
=\sqrt{\left(\frac{1}{100}+\overline{n_{\rm test}/n_{\rm train}}\right)
S^2_{\rm split}}.
\]
Here $S^2_{\rm split}$ is the sample variance of the 100 seed-averaged scores, and the ratio is averaged over the actual split sizes. For a method comparison, the scores are paired differences. Overlap between two methods' individual intervals is not a test of their paired difference. The correction is approximate and does not establish exact coverage for the nonlinear worst-group ratio. These intervals condition on the selected groups and do not account for uncertainty from group selection or earlier dataset exploration. We also use two-sided tests for the per-group NLHF-minus-multigroup differences, with Holm adjustment~\citep{holm1979} across the six groups in each study.

\subsection{Results at the main setting}
\label{app:empirical-anchor}
\label{app:persona-results}

At $\beta=\infty$ and $\tau_L=\tau_E=0.5$, multigroup GLHF has lower mean worst-group distortion than NLHF in both studies: 1.08 versus 1.28 in the coffee-rating study and 1.07 versus 1.36 in the completion-rating study. Table~\ref{tab:empirical-anchor} gives the precise levels and paired differences. Both GLHF methods use active sampling here; NLHF remains passive-trained.

\begin{table}[!htbp]
\centering
\small
\setlength{\tabcolsep}{3pt}
\begin{tabular}{lccccccc}
\toprule
Study & MG & \shortstack{Batch\\NLHF} & Pooled & Ref. & \shortstack{Batch NLHF\\$-$ MG} & Pooled $-$ MG & Ref. $-$ MG \\
\midrule
\shortstack[l]{Coffee-rating\\study}
& 1.0795 & 1.2816 & 1.2528 & 1.1217
& \shortstack{0.2021\\$[0.1507,0.2535]$}
& \shortstack{0.1732\\$[0.1293,0.2171]$}
& \shortstack{0.0421\\$[0.0323,0.0519]$} \\
\shortstack[l]{Completion-rating\\study}
& 1.0695 & 1.3646 & 1.3501 & 1.7357
& \shortstack{0.2951\\$[0.2823,0.3080]$}
& \shortstack{0.2807\\$[0.2700,0.2913]$}
& \shortstack{0.6662\\$[0.6540,0.6784]$} \\
\bottomrule
\end{tabular}
\caption{Worst-group distortion at $\beta=\infty$ and $\tau_L=\tau_E=0.5$ (lower is better). GLHF uses active sampling; batch NLHF uses passive sampling. MG denotes multigroup GLHF and Ref. the uniform reference. Method columns are means; difference columns give paired estimates and corrected 95\% intervals over 100 splits, each averaging 50 learner seeds.}
\label{tab:empirical-anchor}
\end{table}

\paragraph{Groupwise welfare.}
In the coffee-rating study, multigroup GLHF has lower estimated distortion than NLHF and pooled GLHF in every selected group. The largest subgroup gaps are for respondents aged 45+, those with coffee expertise 1--4, and female respondents. It also has the lowest universal-group estimated distortion: $0.992$, compared with $1.005$ for NLHF and $1.012$ for pooled GLHF (Figure~\ref{fig:coffee-group-bars}). The figure includes the untrained uniform lottery as a reference.

In the completion-rating study, multigroup GLHF has lower estimated distortion than NLHF and pooled GLHF in every selected group (Figure~\ref{fig:persona-group-bars}). It exceeds the fixed-lottery benchmark for expressive wording, university education, and the universal group. These values below one reflect its ability to vary the served lottery with memberships.

\begin{figure}[!htbp]
\centering
\includegraphics[width=\linewidth]{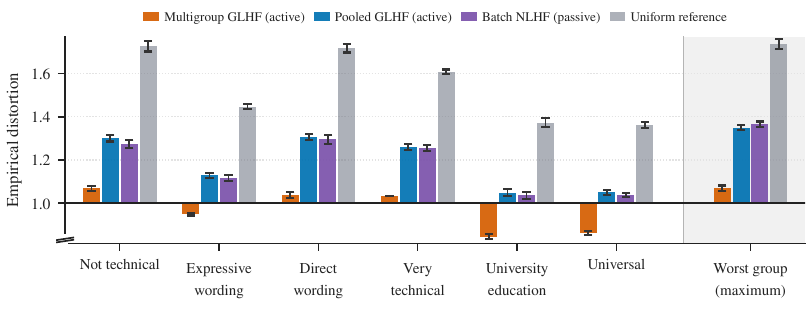}
\caption{Empirical distortion in the completion-rating study at $\beta=\infty$ and $\tau_L=\tau_E=0.5$: active multigroup GLHF, active pooled GLHF, passive batch NLHF, and the uniform reference. Lower is better. Bars below one indicate welfare above the group's best fixed feasible lottery. The rightmost cluster averages the maximum across groups within each run. Whiskers are corrected 95\% confidence intervals for individual means.}
\label{fig:persona-group-bars}
\end{figure}

\paragraph{Paired comparisons within groups.}
Figure~\ref{fig:empirical-group-gaps} compares multigroup GLHF with NLHF on each group. Multigroup distortion is lower in all six groups in each study, with every two-sided test significant at the 0.05 level after Holm adjustment. This conclusion concerns NLHF at the main setting; the group-bar point estimates alone do not establish significance against every comparator.

\begin{figure}[htbp]
\centering
\includegraphics[width=\linewidth]{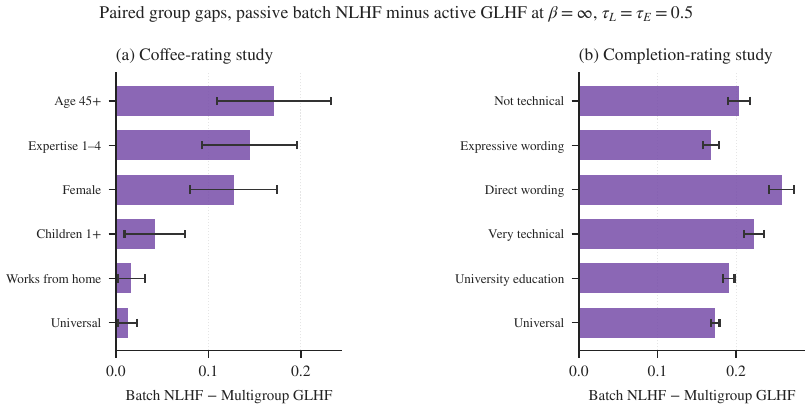}
\caption{Per-group distortion differences: passive batch NLHF minus active multigroup GLHF at $\beta=\infty$, $\tau_L=\tau_E=0.5$. Positive values favor multigroup GLHF. Corrected 95\% intervals apply to each difference separately, not to all differences jointly. The accompanying two-sided tests use Holm adjustment over the six groups within each study.}
\label{fig:empirical-group-gaps}
\end{figure}

\paragraph{Active and passive sampling.}
Section~\ref{sec:empirical-trajectories} reports the paired endpoint comparisons and Figure~\ref{fig:empirical-learning-curves} shows the trajectories. Both use the held-out evaluation and averaging procedure above, with group maxima taken within each split and seed before averaging. The plotted confidence intervals apply to individual checkpoints.

\subsection{Sensitivity checks}
\label{app:empirical-robustness}

\paragraph{Learner radius.}
A larger $\tau_L$ allows the learner to move farther from the uniform reference. Figures~\ref{fig:coffee-radius} and~\ref{fig:persona-radius} vary $\tau_L$ over $\{0.05,0.2,0.35,0.5,0.75,1\}$ while keeping the evaluation benchmark fixed at $\tau_E=0.5$ and feedback at $\beta=\infty$. Both GLHF methods use active sampling; NLHF remains passive-trained.

Multigroup GLHF has the lowest mean worst-group distortion among the trained methods at every tested radius in both studies. In the coffee-rating study, its distortion decreases from 1.110 to 1.072 as the radius grows, while pooled GLHF and NLHF worsen. For NLHF, universal-group distortion falls with the larger radius, but distortion rises for respondents aged 45+, with coffee expertise 1--4, and female respondents; these groups' pairwise preference margins were negatively correlated with the population margins in the discovery sample. This pattern is consistent with a single lottery moving toward coffees favored by the population overall while serving groups with less aligned preferences less well. In the completion-rating study, multigroup distortion falls from 1.452 to 0.953. Because the benchmark radius stays fixed, these curves show how additional training flexibility changes welfare against the same target. When $\tau_L>\tau_E$, distortion below one can also reflect the learner's larger feasible set.

\begin{figure}[!htbp]
\centering
\includegraphics[width=\linewidth]{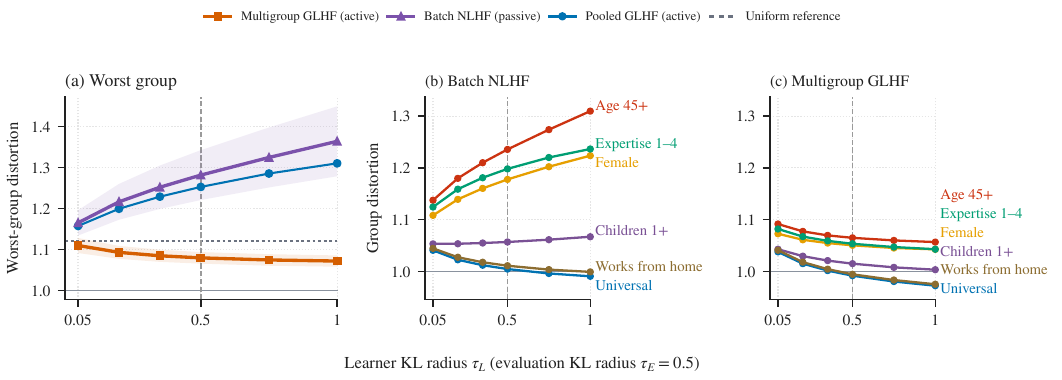}
\caption{Distortion in the coffee-rating study as the learner radius varies, with $\beta=\infty$ and evaluation radius fixed at $\tau_E=0.5$. The evaluated learner KL radii are $0.05$, $0.2$, $0.35$, $0.5$, $0.75$, and $1$. Both GLHF methods use active sampling; batch NLHF uses passive sampling. Panel (a) shows worst-group distortion for all four methods; its dotted horizontal line marks the uniform reference. Panels (b) and (c) show groupwise batch NLHF and multigroup GLHF distortion. The dashed line marks the main learner radius, $0.5$. Bands in (a) are corrected 95\% intervals for individual means; groupwise curves show point estimates.}
\label{fig:coffee-radius}
\end{figure}

\begin{figure}[!htbp]
\centering
\includegraphics[width=\linewidth]{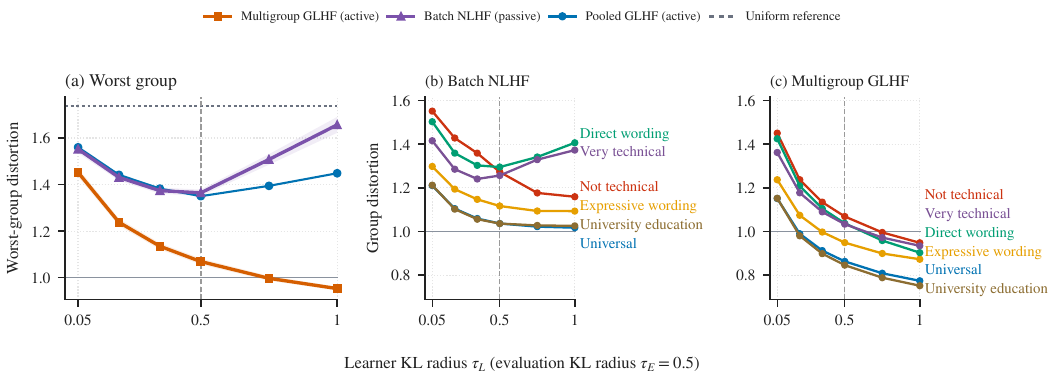}
\caption{Distortion in the completion-rating study as the learner radius varies, with $\beta=\infty$ and evaluation radius fixed at $\tau_E=0.5$. The evaluated learner KL radii are $0.05$, $0.2$, $0.35$, $0.5$, $0.75$, and $1$. Both GLHF methods use active sampling; batch NLHF uses passive sampling. Panel (a) shows worst-group distortion for all four methods; its dotted horizontal line marks the uniform reference. Panels (b) and (c) show groupwise batch NLHF and multigroup GLHF distortion. Bands in (a) are corrected 95\% intervals for individual means; groupwise curves show point estimates.}
\label{fig:persona-radius}
\end{figure}

\paragraph{Comparison noise.}
Section~\ref{sec:empirical-noise} reports the full noise grid. From $\beta=1/2$ to $\infty$, passive multigroup distortion decreases from 1.125 to 1.091 in the coffee-rating study and from 1.530 to 1.099 in the completion-rating study. At $\beta=4$, active sampling lowers the multigroup point estimates relative to passive sampling by 0.0113 and 0.0431, respectively.

\paragraph{Constructing pseudo-respondents.}
We repeated the completion-rating analysis with four additional shuffles of the same ratings, using passive sampling at $\beta=\infty$ and $\tau_L=\tau_E=0.5$. We repeated group selection for each shuffle and selected the same five subgroups every time. The pooled-minus-multigroup differences in worst-group distortion ranged from $0.2747$ to $0.2786$, close to $0.2760$ for the original construction. The active results use only the original construction.

\subsection{Limitations}
\label{app:empirical-limitations}

These studies evaluate welfare under specific rating-based utility models and selected group definitions. Both assume that equal rating increments represent equal, interpersonally comparable utility increments; the reported distortion also depends on assigning zero utility to the lowest rating. The coffee study captures heterogeneous preferences over shared alternatives, but choosing among four coffees does not test the complexity of LLM responses or variation across prompts. The completion study uses one model to rate six answers to one question, with separately elicited ratings combined into pseudo-respondents; it does not measure human preferences. Groups were chosen partly to capture disagreement with population preferences, and the confidence intervals condition on these choices rather than accounting for variability in group selection. The baseline comparisons therefore demonstrate gains from conditioning on these memberships, but do not establish an advantage over other group-aware learning methods.

At the deterministic-comparison setting ($\beta=\infty$), the worst-case welfare factor $L/\ell_\beta$ diverges; low empirical distortion in this setting does not validate a finite theoretical welfare bound. Section~\ref{sec:empirical-noise} also evaluates finite $\beta$, generating labels from the individual Bradley--Terry model on the rescaled ratings. The reported ratios are empirical welfare measurements, not evaluations of the theoretical worst-case distortion bound.

\subsection{Implementation and reproducibility}

\subsubsection{Adaptive learning rates}
\label{app:empirical-rates}

The implementation uses the rates and updates in Section~\ref{sec:algorithm}, with $\tau=\tau_L$ and $T$ equal to the number of training comparisons. Rates depend on accumulated comparison scores, not population group masses or the confidence level. Each saved policy uses the state before its round's comparison, as in Algorithm~\ref{alg:main}. KL projections are computed numerically as described below. At $\tau_L=0$, the output is the reference lottery and no updates are needed.

\paragraph{Computing the KL projection.}
We compute the projection of $z_{t,k}$ using
Appendix~\ref{app:projection}, with $v=z_{t,k}$ and
$\tau=\tau_L$. If the candidate lies within the learner's
KL ball, we retain it. Otherwise, we use Brent's method
to find the point on the geometric path from $\piref$
to $z_{t,k}$ whose KL divergence from $\piref$ equals
$\tau_L$.

The numerical code treats radii at most $10^{-10}$ as zero, accepts a
candidate with KL divergence at most $\tau_L+10^{-10}$, and uses absolute
and relative root-search tolerances of $10^{-12}$. It also floors
underflowed probabilities at the smallest positive normal floating-point
value. The theorem analyzes exact updates and projections; the reported
implementation uses these numerical approximations.

\paragraph{Pooled implementation.}
For pooled GLHF, $K=1$ and the only specialist serves everyone: $\pihat_t(x)=\pi_{t,1}$. Score cancellation gives $\pi_{t,1}^{\top}\ghat_t=0$, so the outer weight stays at one. The adaptive schedule remains well defined at $K=1$ because $\Lambda_T\ge1$.

\subsubsection{Computing the welfare benchmark}
\label{app:empirical-kl-linear}

To compute the benchmark in
\eqref{eq:empirical-welfare-optimum}, we maximize
$q^\top\bar U_{kf}$ over $\B_{\tau_E}$.
At zero radius or when all alternatives have equal mean
utility, we use $\piref$. If the reference lottery
conditioned on the highest-utility alternatives is feasible,
we use that lottery. Otherwise, the KL constraint binds,
and the solution has the exponential-tilt form
\[
q_\lambda(a)
=\frac{\piref(a)\exp(\bar U_{kf}(a)/\lambda)}
{\sum_b\piref(b)\exp(\bar U_{kf}(b)/\lambda)},
\qquad \lambda>0.
\]
Here $\lambda$ is the multiplier for the KL constraint.
We use a one-dimensional root search to choose it so that
$\KL(q_\lambda\|\piref)=\tau_E$. Benchmark and deployed welfare
are calculated as expectations under their respective
lotteries, without sampling an alternative.

\subsubsection{Batch NLHF}
\label{app:empirical-nlhf}

For each pair $(a,b)$, NLHF estimates $\Pr(a\succ b)$ as
the observed fraction of wins for $a$ among comparisons
of that pair. An unobserved pair is assigned probability
$1/2$ and recorded as unobserved. These estimates define
the antisymmetric margin matrix
$\widehat M_{ab}=2\widehat{\Pr}(a\succ b)-1$ for $a\ne b$,
with $\widehat M_{aa}=0$. This estimates the population
margins $\E[M(X)]$ from the passive comparison log.
Using the feasible set from \eqref{eq:ball} with radius
$\tau_L$, we seek a fixed lottery $p$ minimizing the largest
advantage of any feasible challenger:
\[
\min_{p\in\B_{\tau_L}}\max_{q\in\B_{\tau_L}}
q^\top\widehat M p.
\]
The fitted policy serves this same lottery to every user.
Antisymmetry and the common feasible set imply that
the optimal value is zero.

For a fixed lottery $p$, the dual formulation is
\[
\max_{q\in\B_{\tau_L}}q^\top\widehat M p
=\inf_{\lambda>0}
\left\{
\lambda\tau_L+
\lambda\log\left[
\sum_a\piref(a)
\exp\left(\frac{(\widehat M p)_a}{\lambda}\right)
\right]
\right\}.
\]
Here $\lambda$ is the multiplier for the challenger's KL
constraint. We minimize the right-hand expression jointly
over $p\in\B_{\tau_L}$ and the auxiliary variable $\log\lambda$
using SciPy's SLSQP solver, with finite-difference
derivatives, at most 500 iterations, and
\texttt{ftol}$=10^{-12}$. Each fit uses one optimization
call initialized at $p=\piref$ and
$\lambda=\max\{\tau_L,10^{-3}\}$, without restarts or
a secondary equilibrium-selection step.

We normalize the returned lottery and, if needed,
project it back into the learner's KL ball. We then
check its maximum feasible challenger advantage using
the exponential-tilt procedure described in
Appendix~\ref{app:empirical-kl-linear}, replacing
$\bar U_{kf}$ by $\widehat M p$ and $\tau_E$ by
$\tau_L$. We accept a fit only if this advantage is
at most $2\times10^{-5}$; all reported fits passed.
This check is separate from the optimizer's convergence
flag and does not concern held-out welfare.

\subsubsection{Synthetic collection materials}
\label{app:persona-materials}

The collection requests specified Claude Haiku 4.5, temperature 1.0, a maximum output length of 64 tokens, and extended thinking disabled. Each request contained a single user message with one persona background, the question, and one answer, with no conversation history or other candidate answers. It requested a JSON object containing one integer score. Background lines appeared in the fixed order age, education, technical background, and preferred writing style. Every combination of the four two-level attributes in Table~\ref{tab:persona-features} was used, giving 16 personas.

\begin{table}[!htbp]
\centering
\small
\begin{tabular}{p{0.18\linewidth}p{0.36\linewidth}p{0.36\linewidth}}
\toprule
Feature & First level & Second level \\
\midrule
Age & 22-32 & 62-72 \\
Education & Highest completed education is high school or less & Highest completed education is a university degree \\
Technical background & Not technical; no technical/STEM work, training, or study & Very technical; extensive technical/STEM work, training, or study \\
Preferred writing style & Prefers direct and literal wording & Enjoys expressive and literary wording \\
\bottomrule
\end{tabular}
\caption{The four two-level attributes used to form the 16 persona descriptions.}
\label{tab:persona-features}
\end{table}

\paragraph{Prompt template.}
The following text is reproduced from the collection. The placeholders are replaced with the persona background, the question \emph{Why is the sky blue?}, and exactly one answer below.
\begin{quote}
\small
Use the following background as context:

\noindent \{background\}

\noindent People in these groups can disagree. Do not assume any traits or personal history beyond what is listed.

\noindent You just asked, "\{question\}" One person gave you the answer below, exactly as written, with no further explanation:

\noindent \{answer\}

\noindent Rate how well the answer works for you.

\noindent Give one overall score from 1 to 10. A score of 1 means it does not work for you at all; 5 means it partly works but has problems; and 10 means it works extremely well for you.

\noindent Return only a JSON object with one key named "score" and an integer from 1 through 10 as its value. Do not provide an explanation.
\end{quote}

\paragraph{Fixed answer set.}
The first four answers vary accessible versus technical content and direct versus expressive style. The last two are negative controls. The headings below identify each answer's role in the study; they were not shown to the evaluator.

\paragraph{Accessible, direct (correct).}
Sunlight contains many colors, and the air spreads the blue ones around more than the red ones. That is why the sky looks blue.

\paragraph{Accessible, expressive analogy (correct).}
Imagine white sunlight as a crowd of colors crossing the air. The red colors mostly keep marching forward, while the blue colors are nudged into wandering paths by tiny molecules. Those blue wanderers reach your eyes from all across the heavens, painting the daytime view blue. It is a simple picture of how air scatters light.

\paragraph{Technical, direct (correct).}
How strongly air redirects light depends sharply on the light's wavelength: the strength varies approximately with the inverse fourth power, so 450-nanometer blue light is scattered several times more strongly than 650-nanometer red light. This is the Rayleigh regime, which applies because air molecules are much smaller than visible wavelengths. Scattering along every line of sight creates a blue-weighted diffuse radiance. Violet is less prominent because the solar spectrum, atmospheric absorption, and human visual sensitivity reduce its contribution. At sunset, the longer optical path scatters blue out of the direct beam, leaving red and orange.

\paragraph{Technical, expressive (correct).}
Air does not redirect every color equally. Its effect rises sharply as wavelength shortens, roughly with the inverse fourth power, so 450-nanometer blue light is scattered several times more strongly than 650-nanometer red light. In this Rayleigh regime, blue photons are swept from the Sun's direct beam and distributed across innumerable lines of sight, building the blue radiance overhead; red photons travel onward more faithfully. Violet scatters still more, yet solar output, atmospheric absorption, and the eye's sensitivity mute it. At sunset, the lengthened path spends the blues and leaves the horizon burning red.

\paragraph{Fictional-story control (incorrect).}
When the world was new, the sky was clear as glass and the sea alone held color. Each morning, the wind carried a little of the sea upward, spreading its blue across the empty air so travelers could tell day from night. Rain returned some of that color to the water, while evening loosened the rest into red and gold before darkness arrived. That is why the sky and ocean share one blue, continually passed between them by wind and weather.

\paragraph{Plain incorrect control (incorrect).}
The sky is blue because it reflects the color of the oceans below it. Light bounces off the water and spreads across the sky.

\subsubsection{Code and saved results}
\label{app:empirical-code}

The \href{https://github.com/Rob10001/groupwise-distortion-experiments}{public GitHub repository} contains the code, both input datasets, saved evaluation results, and figure-generation scripts. Its \nolinkurl{README.md} gives installation and reproduction commands, and \nolinkurl{data/README.md} describes the inputs. Paths below are relative to the repository root:
\begin{itemize}[leftmargin=1.5em,itemsep=1pt,topsep=3pt]
\raggedright
\item Input datasets: \nolinkurl{data/coffee/} and \nolinkurl{data/persona/}; the latter also contains the synthetic collection materials.
\item Comparison results: \nolinkurl{results/coffee/comparison/} and \nolinkurl{results/persona/comparison/}; each dataset's \nolinkurl{beta-source/} directory contains the $\beta$ runs.
\item Learning-curve reductions: \nolinkurl{results/coffee/learning-curves/} and \nolinkurl{results/persona/learning-curves/}.
\item Synthetic-row shuffles: \nolinkurl{results/persona/panel-permutations/}.
\end{itemize}

Saved comparison results include held-out welfare and distortion for each split, seed, method, and group. The empirical package retains these results and their reduction tables, while omitting per-comparison training logs and per-split learning-curve arrays to reduce its size. The figure scripts can draw the paper figures from the bundled tables.

Clone or download the repository and run commands from its root directory. Using Python 3.12, install dependencies and reproduce the experiments with:
\begin{verbatim}
python -m pip install -r requirements.txt
python reproduce.py --workers 4
\end{verbatim}
This reruns both datasets' $\beta$/protocol grids, GLHF and NLHF comparisons, learner-radius sweeps, learning curves, and five synthetic-row alignments from the input ratings, then generates the paper figures. New results are written to \nolinkurl{results/reproduced/}. For a reduced end-to-end check, run:
\begin{verbatim}
python reproduce.py --smoke --workers 2
\end{verbatim}
The README also gives commands for individual experiments and for regenerating figures from saved results.

\end{document}